\documentclass{article}

\PassOptionsToPackage{numbers, compress}{natbib}

 \usepackage[preprint]{neurips_2026}

\usepackage[utf8]{inputenc} %
\usepackage[T1]{fontenc}    %
\usepackage{url}            %
\usepackage{microtype}      %
\usepackage[dvipsnames, table]{xcolor}

\definecolor{darkblue}{rgb}{0, 0, 0.5}
\usepackage[colorlinks=true, citecolor=darkblue,linkcolor=darkblue, urlcolor=darkblue]{hyperref}

\usepackage{fancyhdr}
\usepackage{algorithm}
\usepackage{algorithmic}
\usepackage{forloop}

\usepackage{booktabs}
\usepackage{makecell}
\usepackage{multirow}
\usepackage{arydshln}

\usepackage{graphicx}
\usepackage{comment}
\usepackage{xspace}
\usepackage{enumitem}
\usepackage{pgfplots}
\usepgfplotslibrary{groupplots}

\usepackage{tikz}
\usetikzlibrary{arrows.meta,positioning,calc,decorations.pathreplacing,shapes.geometric,backgrounds,fit,patterns,fadings}

\usepackage{subcaption}

\usepackage{amsmath}
\usepackage{amssymb}
\usepackage{mathtools}
\usepackage{amsthm}
\usepackage{amsfonts}
\usepackage{bm}
\usepackage{nicefrac}
\usepackage{dsfont}

\usepackage{mdframed}
\definecolor{theoremcolor}{rgb}{0.94, 0.94, 0.94}
\definecolor{examplecolor}{rgb}{1, 1, 1.0}
\mdfsetup{
    backgroundcolor=theoremcolor,
    linewidth=0pt,
}
\newmdtheoremenv[linewidth=0pt,innerleftmargin=4pt,innerrightmargin=4pt]{definition}{Definition}
\newmdtheoremenv[linewidth=0pt,innerleftmargin=4pt,innerrightmargin=4pt]{proposition}{Proposition}
\newmdtheoremenv[linewidth=0pt,innerleftmargin=0pt,innerrightmargin=0pt,backgroundcolor=examplecolor]{example}{Example}
\newmdtheoremenv[linewidth=0pt,innerleftmargin=4pt,innerrightmargin=4pt]{corollary}{Corollary}
\newmdtheoremenv[linewidth=0pt,innerleftmargin=4pt,innerrightmargin=4pt]{theorem}{Theorem}
\newmdtheoremenv[linewidth=0pt,innerleftmargin=4pt,innerrightmargin=4pt]{lemma}{Lemma}
\newenvironment{restatetheorem}[2][]{%
  \par\medskip
  \begin{mdframed}[
    backgroundcolor=theoremcolor,
    linewidth=0pt,
    innerleftmargin=4pt,
    innerrightmargin=4pt,
    nobreak=true
  ]%
  \noindent
  \textbf{#2}%
  \ifx\relax#1\relax
  \else
    \ \textnormal{(#1)}%
  \fi
  \textbf{. }\itshape
}{%
  \end{mdframed}
  \medskip
}

\DeclareMathOperator*{\argmax}{arg\,max}
\DeclareMathOperator*{\argmin}{arg\,min}

\newcommand{\simplex}{\triangle}

\makeatletter
\def\adl@drawiv#1#2#3{%
        \hskip.5\tabcolsep
        \xleaders#3{#2.5\@tempdimb #1{1}#2.5\@tempdimb}%
                #2\z@ plus1fil minus1fil\relax
        \hskip.5\tabcolsep}
\newcommand{\cdashlinelr}[1]{%
  \noalign{\vskip 2pt
           \global\let\@dashdrawstore\adl@draw
           \global\let\adl@draw\adl@drawiv}
  \cdashline{#1}[.4pt/2pt]
  \noalign{\global\let\adl@draw\@dashdrawstore
           \vskip 2pt}}
\makeatother

\usepackage{multirow}
\usepackage{pifont}
\usepackage{wrapfig}

\NewDocumentCommand{\entmax}{o}{%
  \IfNoValueTF{#1}%
    {%
      \ensuremath{\alpha\text{-entmax\xspace}}%
    }{%
      \ensuremath{#1\text{-entmax\xspace}}%
    }%
    \xspace
}

\newcommand{\methodname}{DashAttention\xspace}
\newcommand{\methoddescription}{{Differentiable and Adaptive Sparse Hierarchical Attention}\xspace}

\title{\methodname: Differentiable and Adaptive \\ Sparse Hierarchical Attention}

\author{
\textbf{Yuxiang Huang\textsuperscript{1}\thanks{Equal contribution.}}~,~
\textbf{Nuno M. T. Gon\c{c}alves\textsuperscript{2,3,4}\footnotemark[1]}~,~
\textbf{Federico Alvetreti\textsuperscript{5}},~
\textbf{Lei Li\textsuperscript{4}},~\\
\textbf{Xu Han\textsuperscript{1}},~
\textbf{Edoardo M. Ponti\textsuperscript{6}},~
\textbf{Andr\'e F. T. Martins\textsuperscript{2,3,7,8}},~
\textbf{Marcos V. Treviso\textsuperscript{2,3,7}}
\\
\textsuperscript{1}Tsinghua University.
\textsuperscript{2}Instituto Superior T\'ecnico, Universidade de Lisboa. \\
\textsuperscript{3}Instituto de Telecomunica\c{c}\~oes.
\textsuperscript{4}Carnegie Mellon University. 
\textsuperscript{5}Sapienza University of Rome.
\\
\textsuperscript{6}University of Edinburgh.
\textsuperscript{7}TransPerfect.
\textsuperscript{8}ELLIS Unit Lisbon.
\\
\texttt{huang-yx21@mails.tsinghua.edu.cn, nuno.m.goncalves@ulisboa.pt}
}

\begin{document}
\setcounter{footnote}{0}

\maketitle

\begin{abstract}
Current hierarchical attention methods, such as NSA and InfLLMv2, select the top-$k$ relevant key-value (KV) blocks based on coarse attention scores and subsequently apply fine-grained softmax attention on the selected tokens. 
However, the top-$k$ operation assumes the number of relevant tokens for any query is fixed and it precludes the gradient flow between the sparse and dense stages. 
In this work, we propose \methodname ({\methoddescription}), which leverages the adaptively sparse $\alpha$-entmax transformation to select a \emph{variable} number of blocks according to the current query in the first stage. 
This in turn provides a prior for the second-stage softmax attention, keeping the entire hierarchy \emph{fully differentiable}.
Contrary to other hierarchical attention methods, we show that \methodname is non-dispersive, translating to better long-context modeling ability. 
Experiments with large language models (LLMs) show that \methodname achieves comparable accuracy as full attention with 75\% sparsity and a better Pareto frontier than NSA and InfLLMv2, especially in high-sparsity regimes. 
We also provide an efficient, GPU-aware implementation of \methodname in Triton, which achieves a speedup of up to $3.3\times$ over FlashAttention-3 at inference time.
Overall, \methodname offers a cost-effective strategy to model long contexts.\footnote{Code available at \url{https://github.com/fasa-org/dash-attention}.}
\end{abstract}

\section{Introduction}
\label{sec:intro}

The complexity of long-context tasks is impacted by the quantity of information to be retrieved, its obfuscation (i.e., whether it is distinguishable from noise), and its distribution throughout the context (e.g., scattered or concentrated) \citep{goldman-etal-2024-really,nawrot2026sparsefrontiersparseattention}.
Hence, to achieve good performance in such tasks, models must remain \emph{selective} enough to ignore  irrelevant tokens, but also \emph{sufficiently flexible} to recover the positions that matter for a given query, irrespective of their number, position, or similarity to the rest of the context.
Dense softmax attention~\citep{vaswani2017attention} handles the second requirement but fails the first, since every visible token receives nonzero probability mass, which can cause dispersion \citep{velickovic2025softmax}. 
Hard sparse routing methods such as top-$k$ block selection~\citep{yuan-etal-2025-native,zhao2026infllmv} address the first requirement; however, they do so by imposing a fixed budget, sacrificing the second requirement, and by severing the differentiable path between coarse routing decisions and fine token-level attention.
Therefore, achieving both query-dependent flexibility and stringent token-level selectivity remains an open challenge.

This paper proposes \methodname (\methoddescription), a multi-stage attention mechanism designed to meet both requirements.
Instead of routing with a hard top-$k$ operator over block scores, we route with $\alpha$-entmax~\citep{peters-etal-2019-sparse}, an adaptive sparse distribution whose support is learned from the input itself and whose nonzero masses remain differentiable.
The chunk-level probabilities are then consumed by a second online-softmax refinement stage that operates at full resolution, but only on tokens inside the routed chunks.
This way, the model learns \emph{where} and \textit{how much} to look at coarsely, and \emph{what} to read precisely, while keeping the entire hierarchy differentiable end-to-end.

This design is motivated by both theory and practice.
On the theory side, recent work has shown that softmax full attention can be dispersive in long contexts, whereas sparse alternatives such as \entmax can preserve concentration and improve long-context capabilities~\citep{velickovic2025softmax,vasylenko2026longcontext}.
On the systems side, modern sparse attention methods almost always rely on a hierarchical decomposition, since selecting tokens based directly on attention scores is too expensive: the query-key multiplications must still be computed, leaving little opportunity to reduce computation~\citep{yuan-etal-2025-native,zhao2026infllmv,lu2025moba}.
Our key intuition is to combine these two ideas into a single hierarchical attention mechanism that is inherently adaptive, fully differentiable, and efficient. 
Our main contributions are as follows:
\begin{itemize}[leftmargin=1em, itemsep=1pt, topsep=0pt]
\item We analyze the limitations of top-$k$-based sparse attentions and introduce \methodname, an end-to-end differentiable method that adaptively allocates sparsity across attention heads.
\item We integrate \methodname into long-context continual pretraining and show that it outperforms existing hierarchical sparse attention methods while matching full-attention accuracy.
\item We present an efficient GPU-aware implementation of \methodname that achieves speedups of 3.36$\times$ and 1.35$\times$ over FlashAttention-3 and InfLLMv2, respectively.
\end{itemize}

\begin{figure}[t]
    \centering
    \includegraphics[width=1\linewidth]{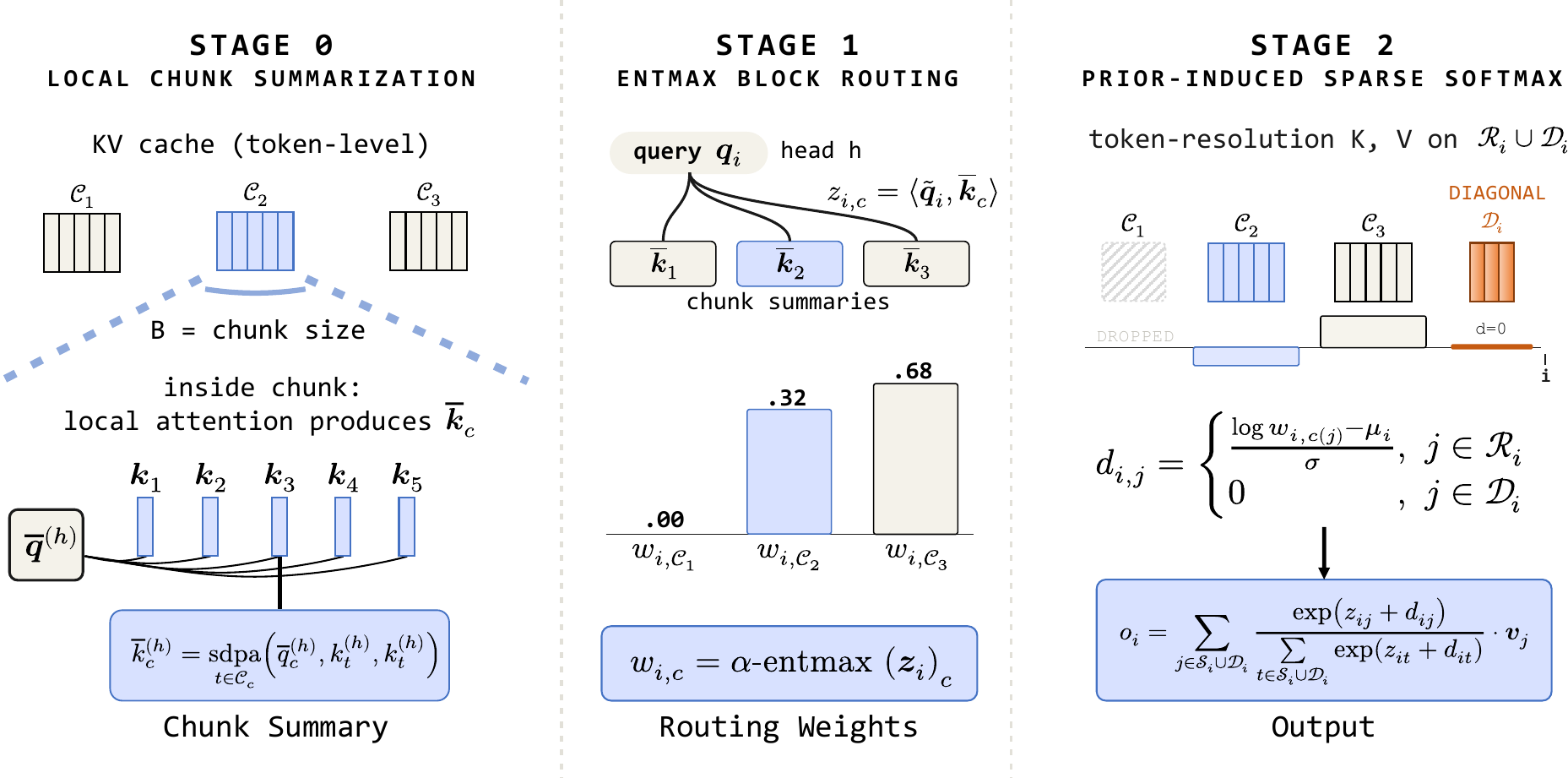}
    \caption{High-level overview of \methodname. Stage 0 builds chunk summaries by local SDPA; Stage 1 routes with \entmax to obtain an adaptively sparse support; Stage 2 refines at token resolution with a softmax whose logits are biased by $d_{i,j}$, derived from the router weights, maintaining full differentiability and FlashAttention compatibility. See Section~\ref{sec:method} for details.}
    \label{fig:method}
\end{figure}

\section{Background}
\label{sec:background}

\paragraph{Standard scale dot-product attention.}

Given a set of matrices $\bm{Q}, \bm{K},\bm{V} \in \mathbb{R}^{n \times d}$ containing $d$-dimensional representations for $n$ queries, keys, and values, the \textit{scale dot-product
self-attention} (SDPA) at a single head is computed as follows~\citep{vaswani2017attention}:
\begin{equation}\label{eq:dotproduct-attention}
    \bm{Z} = \frac{\bm{Q}\bm{K}^\top}{\sqrt{d}} \in \mathbb{R}^{n \times n}, \quad
    \bm P = \pi \left( \bm Z + \bm M \right), \quad
    \bm{O} = \bm P \bm{V} \in \mathbb{R}^{n \times d},
\end{equation}
where $\bm M$ is the attention mask and $\pi: \mathbb{R}^n\to \triangle_{n}$ is a transformation applied row-wise that maps logits to  vectors in the $n$-dimensional probability simplex $\triangle_n := \{\bm{p} \in \mathbb{R}^n \,:\, \mathbf{1}^\top \bm{p} = 1, \, \bm{p}\ge \mathbf{0}\}$, with softmax being the most common choice.
FlashAttention~\citep{dao2022flashattention} efficiently implements Eq.~\eqref{eq:dotproduct-attention} by reducing memory to $\mathcal{O}(n)$ while preserving softmax semantics.

\paragraph{Adaptively sparse attention: \entmax.}
\label{subsec:entmax}

Softmax attention is inherently dense, as every key receives a positive weight. A principled \emph{differentiable} sparse alternative is the \entmax function~\citep{peters-etal-2019-sparse}, 
\begin{equation}\label{eq:solution_entmax}
    \entmax(\bm{s}) = [(\alpha - 1)\bm{s} - \tau\bm{1}]_{+}^{\frac{1}{\alpha-1}},
\end{equation}
where $[\cdot]_{+}$ is the ReLU function, and $\tau \in \mathbb{R}$ is the (unique) normalizing constant which ensures the output is a valid probability distribution, $\sum_i \entmax (\bm{s})_i = 1$. 
Importantly, for $\alpha>1$, \entmax can produce sparse probability vectors. 
When $\alpha\to1$, \entmax recovers softmax, whereas $\alpha=2$ leads to sparsemax~\citep{martins2016softmax}; 
the sparsity increases monotonically with $\alpha$. 
This means that coordinates with $(\alpha-1)s_i\le \tau$ become exactly zero.
As such, 
\entmax yields dynamic sparsity, where the pattern of zeros (as well as its number) depends on the input $\bm{s}$.
Efficient GPU implementations for \entmax attention have been developed in AdaSplash~\citep{goncalves2025adasplash,goncalves2026adasplash2}.
More information on the \entmax transformation, including the distinction to top-$k$ softmax, is provided in App.~\ref{section:app_alpha_entmax}.

\paragraph{Hierarchical trainable sparse attention.}

Recently proposed pretraining-compatible sparse attention methods~\citep{yuan-etal-2025-native, zhao2026infllmv, lu2025moba} typically adopt a coarse-to-fine hierarchical design: the context is divided into blocks or pages, summarized by lightweight representations, routed via top-$k$ coarse relevance scores, and followed by exact token-level SDPA within the selected KV segments.
However, their fixed top-$k$ selection overlooks the heterogeneous information demands across tokens, attention heads, and layers.
As shown in Sec.~\ref{subsec:ablation_studies}, \methodname dynamically allocates sparsity across tokens and heads, which is crucial for improving performance under a fixed sparsity budget.

\section{\methodname}
\label{sec:method}

\paragraph{Motivation.}
The key innovation of \methodname is that the coarse router is itself a sparse attention mechanism---specifically, $\alpha$-entmax from Eq.~\eqref{eq:solution_entmax}---rather than a top-$k$ truncation applied to dense scores. Since $\alpha$-entmax is fully differentiable, the hierarchical attention remains end-to-end trainable, %
while ensuring a level of sparsity adaptive to each specific token sequence. Hence, 
\methodname is built around two complementary inductive biases: 
$\alpha$-entmax is used %
to suppress irrelevant chunks, whereas
softmax is used %
to obtain a relative ranking inside small selected regions. %
The resulting attention is therefore \textit{sparse} where sparsity is computationally useful, and \textit{dense} where density is semantically meaningful.
This combination also matters for optimization.
In methods based on top-$k$ block routing, the budget $k$ itself becomes a central hyperparameter, and the support changes through a discontinuous top-$k$ map; thus, gradients do not directly capture how routing scores should change block membership.
In \methodname, by contrast, the support is learned from the data, the coarse masses are differentiable, and the refinement stage preserves a direct gradient path from the final loss to the block-summarization (Stage 0) and block-routing (Stage 1) parameters.

\paragraph{Problem setup and notation.}
Assume the model has model dimension $d$ and head dimension $d_h$. We let $h_q$ and $h_{kv}$ denote the number of query and key-value heads, respectively. 
When using GQA~\citep{ainslie2023gqa}, we denote the query-group of the $r$\textsuperscript{th} KV head as $\mathcal{G}_r$, and the grouping factor as $g_{q}$.
We partition the prefix into contiguous chunks of size $B$, and denote the index set of chunk $c$ by $\mathcal{C}_c$.
For a query position $i$, \methodname proceeds in three stages, illustrated in Fig.~\ref{fig:method}: 
(Stage~0)~we first summarize each chunk into a compact representation using local attention driven by a learned summarization head;
(Stage~1)~the query attends to these chunk summaries with \entmax, producing a sparse routing distribution; (Stage~2)~only the routed chunks are expanded back to token resolution and refined with softmax.
We explain each stage next.

\subsection{Stage 0: Local Chunk Summarization}
\label{sec:stage0}

Existing methods use mean pooling~\citep{zhao2026infllmv, lu2025moba} or a learned MLP~\citep{yuan-etal-2025-native} to summarize each KV chunk. However, they are either less expressive or hard to adapt from a pretrained model.
To mitigate that, we introduce a local-attention mechanism by using a learned query vector $\bar{\bm q}\in \mathbb{R}^{h_{kv}\times d_h}$ initialized at zero. As a result, the model begins from a conservative mean-pooling regime and only gradually learns more expressive chunk summaries.
During inference, we compute a summary vector for each chunk by a local SDPA as follows, where $r \in \{1, \ldots, h_{kv}\}$ indexes the key-value head:
\begin{equation}
\bar{\bm{k}}_{c}^{(r)} = \sum_{t \in \mathcal{C}_c} \frac{\exp\!\left(\langle \bar{\bm{q}}^{(r)},\, {\bm{k}}_{t}^{(r)} \rangle\, /\, \sqrt{d_h}\right)}{\sum_{u \in \mathcal{C}_c} \exp\!\left(\langle \bar{\bm{q}}^{(r)},\, {\bm{k}}_{u}^{(r)} \rangle\, /\, \sqrt{d_h}\right)}\; {\bm{k}}_{t}^{(r)}\,,
\label{eq:chunk_rep}
\end{equation}

This design is more expressive than mean pooling used in MoBA and InfLLMv2, as it can be interpreted as a learned summary head that reads each chunk from within the chunk itself.
It is also easier to adapt compared with MLPs used in NSA.
At the start of training, the inner products in Eq.~\eqref{eq:chunk_rep} are zero, causing the local softmax in Eq.~\eqref{eq:chunk_rep} to reduce to uniform mean pooling.
This enables a smooth transition from mean pooling to weighted average of the key vectors within the chunk. 
Notably,
chunk summaries become fixed once a chunk has been fully generated, since the $\bar{\bm{k}}_{c}$ only depends on $\bm k$'s from positions in $\mathcal{C}_c$.
Therefore, this design also guarantees no additional recomputation during inference, aligned with previous chunk summarization methods.
Ablation studies on local attention are conducted in App.~\ref{sec:supp-local-attn-sigma}.

\subsection{Stage 1: Entmax Chunk Routing}
\label{sec:stage1}

Given each query head $h$ associated with key-value head $r = \lfloor h/g_q \rfloor$, we compute chunk-level logits 
$\bar z_{i,c}^{(h)} =  \langle {\bm{q}}_{i}^{(h)},\, \bar{\bm{k}}_{c}^{(r)} \rangle\, /\, \sqrt{d_h}$ for all causal chunks $c$ visible from position $i$. 
These logits are then transformed into a sparse chunk distribution with \entmax with a 
scaling factor $\gamma$:
\begin{equation}
\hat{\bm{w}}_{i}^{(h)} = \alpha\text{-entmax}\!\left(\gamma \bar{\bm{z}}_{i}^{(h)} \right) \in \triangle_{\lfloor n/B\rfloor },\quad h\in \{1, 2, \cdots, h_q\}.
\label{eq:entmax_routing}
\end{equation}
The support $\hat{\mathcal{S}}_{i}^{(h)} = \{c \mid w_{i,c}^{(h)} > 0\}$ determines chunk positions that are retained for the next stage, while other chunks are discarded.
Because routing is performed with \entmax rather than top-$k$, the support is not determined by a fixed budget. 
Instead, it is induced by the geometry of the chunk scores themselves. As a result, informative queries may route to many chunks, while sharp queries may route to very few, which enables dynamic sparsity allocation among different tokens, heads, and layers which is one of the defining advantages of \methodname over fixed-budget sparse attention.

We note that GQA is commonly used in modern LLMs. Therefore, when $h_q > h_{kv}$, multiple query heads share the same key-value head, so Stage~1 must produce a shared routed support for each GQA group. 
In \methodname, for each group $r\in\{1,\cdots,h_{kv}\}$ with member query heads $\mathcal{G}_r$, we average the head-wise probability scores: $\bm{w}_i^{(r)} = \sum_{h \in \mathcal{G}_r} \hat{\bm w_i}^{(h)} / g_q \in \triangle_{\lfloor n/B\rfloor}$, which remains sparse and with a support $\mathcal{S}_i^{(r)} = \cup_{h\in \mathcal{G}_r} \hat{\mathcal{S}}_i^{(h)}$ determined jointly by the chunk-score geometry within $\mathcal{G}_r$.

\subsection{Stage 2: Prior-Induced Sparse Softmax Attention}
\label{sec:stage2}

Here, we derive the final stage of~\methodname. As mentioned in Sec.~\ref{sec:intro}, our goal is to make~\methodname both adaptively sparse and readily adaptable from softmax attention.
To this end, we start from the variational form of softmax~\citep{blondel2019learning}:
\begin{equation}
\text{softmax}(\bm z) = \argmax_{\bm p \in \triangle_{n}} \bm z^\top \bm p - \operatorname{KL}(\bm p \,\|\, \bm u),
\end{equation}
where $\bm u$ denotes an uniform probability vector and $\bm{z}$ the attention logits where $z_{i,j} = \langle \bm q_i, \bm k_j\rangle/\sqrt{d_h}$ and we omit the notation for attention heads for simplicity. 
Instead, we carry the knowledge from Stage~1 ($\bm{w}$) by varying the reference distribution in the $\operatorname{KL}$ term from $\bm u$ to another distribution $\bm g \in \triangle_{n}$, which is obtained from the block routing scores $\bm w$ via $\bm g = g_\sigma(\bm w)$ and described in detail in Sec.~\ref{sec:diagonal}.
We therefore obtain a token-grained sparse probability distribution that enables a block-sparse differentiable softmax computation. Given the Stage~1 scores of the $i$\textsuperscript{th} token $\bm w_i$, we compute the attention probability at the $i$\textsuperscript{th} token as follows:
\begin{equation}
\bm p^{\text{route}}_i = \argmax_{\bm p\in \triangle_{n}} \bm z^\top_i \bm p - \operatorname{KL} \left(\bm p \,\|\, g_\sigma(\bm w_i) \right),\quad p_{i,j}^{\text{route}} = \frac{ g_{i,j} \exp(z_{i,j})}{\sum_{t\leq i} g_{i,t}\exp(z_{i,t})},
\end{equation}
where ${g}_{i,j} = g_\sigma(\bm{w}_i)_j$.
By requiring $g_{i,j} = 0$ whenever $w_{i,j} = 0$ with an appropriate construction of $g_\sigma$, the resulting softmax attention becomes naturally sparse,
since if $g_{i,j}\to 0^+$, $-\operatorname{KL}(p_j\|g_{i,j})\to+\infty$ for any $p_j> 0$, thereby masking Stage~2's softmax probability with an entmax-computed dynamic mask.
At the same time, the properties of dynamic sparsity and full differentiability are preserved through the entmax score $\bm w$. 
The final $i$\textsuperscript{th} output becomes 
\begin{equation}
\label{eq:stage2}
    \bm o_i = \bm V^\top \bm p_i^{\text{route}} = \sum_{j\in \mathcal{S}^{\text{route}}_i} \frac{g_{i,j} \exp(z_{i,j})}{\sum_{t\in \mathcal{S}^{\text{route}}_i} g_{i,t} \exp(z_{i,t})}\cdot \bm v_j,
\end{equation}
where $\mathcal{S}^{\text{route}}_i = \{c \mid g_{i,c} > 0\}$ is shared within the same query group to enable similar matrix multiplication tiling layout with NSA~\citep{yuan-etal-2025-native} and InfLLMv2~\citep{zhao2026infllmv}.
Note that this form is readily compatible with existing efficient attention kernels. In practice, we employ AdaSplash-2~\citep{goncalves2026adasplash2} for Stage 1 (\entmax attention) and FlashAttention~\citep{dao2023flashattention2} for Stage 2 (softmax attention), enabling efficient training and inference. Moreover, Eq.~\eqref{eq:stage2} 
is fully differentiable with respect to $\bm q$, $\bm k$, $\bm v$, and the prior score $\bm g$. 
As a result, in \methodname gradients can propagate back to Stage 1's \entmax scores $\bm w$, thereby making the local chunk summarization from Stage 0 trainable.

\subsection{Controlling Prior Strength and Handling Diagonal Chunks}
\label{sec:diagonal}

Since the diagonal or near-diagonal region may contain fewer than $B$ tokens, the Stage 1 priors $\bm{g} = g_\sigma(\bm{w})$ cannot be obtained due to the absence of block summaries.
At the same time, replacing $\operatorname{KL}(\bm p \,\|\, \bm u)$ with $\operatorname{KL}(\bm p \,\|\, \bm g)$ would drift the outcome $\bm p_i^{\text{route}}$ to a coarsely estimated $\bm g$, since the summary vectors provide only an approximation of each block.
Therefore, to properly estimate $\bm{g}$,
we split the attention mass of $\bm{w}$ into two branches: a routed branch $\mathcal{R}_i$ over blocks selected in Stage~1, and a diagonal branch $\mathcal{D}_i$ that requires prior estimation. 
To control the prior strength, we introduce a hyperparameter $\sigma$ and construct a strength-reduced prior probability as follows:
\begin{equation}
w'_{i,j} = \frac{1}{B}\cdot \frac{w_{i,c(j)}^{\nicefrac{1}{\sigma}}} { \bm 1^\top \bm {w}_i^{ \nicefrac{1}{\sigma}}},
\end{equation}
where $c(j)$ represents the chunk associated with the $j$\textsuperscript{th} element.
As $\sigma \to \infty$, the prior gradually becomes uniform over the routed support: $w_{i,j}' \to 1/|\mathcal{R}_i|$ for $j \in \mathcal{R}_i$, while $w_{i,j}' =0$ for $j \notin \mathcal{S}_i$, thereby corresponding to only applying softmax on selected tokens without any additional prior from Stage~1. 
With that in mind, we define the prior transformation $g_\sigma: \simplex_{\lfloor n/B \rfloor} \to \simplex_n$ as follows:
\begin{equation}
g_\sigma(\bm{w}_i)_j = g_{i,j} =
\begin{cases}
\lambda_iw_{i,j}' \,, & j \in \mathcal{R}_i, \\[0.5em]
\frac{1}{|\mathcal{D}_i|}(1 - \lambda_i)\,, & j \in \mathcal{D}_i, \\[0.5em]
0, & \text{otherwise.}
\end{cases}=
\begin{cases}
\frac{\lambda_iw_{i,c(j)}^{{\nicefrac{1}{\sigma}}}}{B\cdot \bm 1^\top \bm w_i^{\nicefrac{1}{\sigma}}} \,, & j \in \mathcal{R}_i, \\[0.5em]
\frac{1}{|\mathcal{D}_i|}(1 - \lambda_i)\,, & j \in \mathcal{D}_i, \\[0.5em]
0, & \text{otherwise.}
\end{cases}
\label{eq:lambda_factorization}
\end{equation}

Note that $g_{\sigma}(\bm w_i)$ is a probability vector and adds up to one.
By this construction, the prior assigns probability mass $\lambda_i$ to $\mathcal{R}_i$ and $1-\lambda_i$ to $\mathcal{D}_i$.
In practice, we choose $\lambda_i$ as a function of how informative the routing distribution is relative to a uniform baseline.
Let $\bm{u}_{\mathcal{R}_i}$ be the uniform distribution over the routed support and $\bm{w}'_{\mathcal{R}_i}$ denotes the probabilities of $\bm{w}'_i$ restricted to $\mathcal{R}_i$,
then:
\begin{equation}
\lambda_i = \operatorname{sigmoid}\!\left(\operatorname{KL}(\bm{u}_{{\mathcal{R}_i}} \,\|\, \bm{w}_{{\mathcal{R}_i}}') + \log \frac{|\mathcal{R}_i|}{|\mathcal{D}_i|}\right),
\label{eq:lambda_kl}
\end{equation}

This has an intuitive interpretation: if the block router is nearly uniform ($\operatorname{KL} \approx 0$), the model allocates mass roughly proportional to $|\mathcal{R}_i|/(|\mathcal{R}_i|+|\mathcal{D}_i|)$, whereas if the router is highly informative (large KL), more mass is given to the routed branch 
We also have $\lim_{\sigma\to+\infty}\lambda_i = |\mathcal{R}_i|/(|\mathcal{R}_i|+|\mathcal{D}_i|)$, showing that the priors can be weakened by utilizing a larger $\sigma$. 
Next, in Proposition~\ref{th:attn-bias}, we show a straightforward way to compute Eq.~\eqref{eq:lambda_factorization}  in softmax attention.
The proof is given in App.~\ref{sec:equiv-bias-form}.

\begin{proposition}
\label{th:attn-bias}
    Computing the attention output $\bm o_i$ with Eq.~\eqref{eq:stage2} and Eq.~\eqref{eq:lambda_factorization} is equivalent to first calculate $\mu_i = \operatorname{mean}_{j\in \mathcal{R}_i}\{\log w_{i, c(j)}\}$, followed by 
    $$
d_{i,j} =
\begin{cases}
\frac{\log w_{i,c(j)} - \mu_i}{\sigma} \,, & j \in \mathcal{R}_i, \\
0\,, & j \in \mathcal{D}_i. \\
\end{cases}\quad \bm o_i = \sum_{j\in \mathcal{R}_i\cup \mathcal{D}_i} \frac{\exp(z_{i,j} + d_{i,j})\cdot \bm v_j}{\sum_{t\in \mathcal{R}_i\cup \mathcal{D}_i} \exp(z_{i,t} + d_{i,t})}.
    $$
\end{proposition}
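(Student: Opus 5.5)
The plan is to check the claim at the level of the unnormalized weights. Both the Stage~2 output in Eq.~\eqref{eq:stage2} and the biased form in the statement are ratios of the type
\[
\sum_j a_j \bm v_j \Big/ \sum_t a_t
\]
over the same index set $\mathcal{R}_i\cup\mathcal{D}_i$. Such a ratio does not change when every $a_j$ is multiplied by one positive constant. So it suffices to find a constant $C_i>0$, independent of $j$, with
\[
g_{i,j}\exp(z_{i,j}) = C_i\exp(z_{i,j}+d_{i,j}) \quad\text{for all } j\in\mathcal{R}_i\cup\mathcal{D}_i .
\]
Equivalently, $g_{i,j}=C_i e^{d_{i,j}}$. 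Indices outside $\mathcal{R}_i\cup\mathcal{D}_i$ have $g_{i,j}=0$, so they drop out of Eq.~\eqref{eq:stage2} and match the restricted sum in the statement.

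On $\mathcal{D}_i$ we have $d_{i,j}=0$, which forces $C_i=(1-\lambda_i)/|\mathcal{D}_i|$. The whole proof then reduces to checking that, for $j\in\mathcal{R}_i$, Eq.~\eqref{eq:lambda_factorization} satisfies
\[
\frac{\lambda_i\, w_{i,c(j)}^{1/\sigma}}{B\,\Sigma_i}=\frac{1-\lambda_i}{|\mathcal{D}_i|}\,w_{i,c(j)}^{1/\sigma}e^{-\mu_i/\sigma},
\qquad \Sigma_i := \bm 1^\top\bm w_i^{1/\sigma}.
\]
The factor $w_{i,c(j)}^{1/\sigma}$ cancels. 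What remains is a single identity that no longer depends on $j$:
\[
\frac{\lambda_i}{1-\lambda_i}=\frac{B\,\Sigma_i}{|\mathcal{D}_i|}\,e^{-\mu_i/\sigma}.
\]

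The key step is to compute $\lambda_i/(1-\lambda_i)$ from Eq.~\eqref{eq:lambda_kl}. Because $\lambda_i=\operatorname{sigmoid}(x)$, its logit is
\[
\log\frac{\lambda_i}{1-\lambda_i}=x=\operatorname{KL}(\bm u_{\mathcal{R}_i}\|\bm w'_{\mathcal{R}_i})+\log\frac{|\mathcal{R}_i|}{|\mathcal{D}_i|}.
\]
For the KL term, use $w'_{i,j}=w_{i,c(j)}^{1/\sigma}/(B\Sigma_i)$ to get
\[
\operatorname{KL}(\bm u_{\mathcal{R}_i}\|\bm w'_{\mathcal{R}_i})=\frac{1}{|\mathcal{R}_i|}\sum_{j\in\mathcal{R}_i}\log\frac{1/|\mathcal{R}_i|}{w'_{i,j}}
=-\log|\mathcal{R}_i|+\log B+\log\Sigma_i-\frac{\mu_i}{\sigma},
\]
where $\mu_i$ is the token-level mean of $\log w_{i,c(j)}$ over $\mathcal{R}_i$, exactly as defined in the statement. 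Adding $\log(|\mathcal{R}_i|/|\mathcal{D}_i|)$ cancels the $\log|\mathcal{R}_i|$ term. Exponentiating then gives precisely the required identity.

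The main obstacle is a normalization subtlety rather than the algebra. The argument needs $\bm w'$ restricted to $\mathcal{R}_i$ to be a genuine probability vector, so that the KL above is well defined. This holds if:
\begin{itemize}
\item the sum $\Sigma_i$ runs over the routed chunks, and
\item each routed chunk contributes exactly $B$ tokens to $\mathcal{R}_i$, so that $\sum_{j\in\mathcal{R}_i}w'_{i,j}=1$.
\end{itemize}
I would state this convention explicitly: the diagonal, partially filled chunk sits in $\mathcal{D}_i$ and not in $\mathcal{R}_i$. I would also note that $\mu_i$ is the same whether it is averaged over tokens or over chunks, since every routed chunk carries $B$ tokens. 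Nothing else in the identity depends on how $\Sigma_i$ is normalized, because $\Sigma_i$ cancels consistently. Finally, I would remark that for $j\in\mathcal{R}_i$ we have $w_{i,c(j)}>0$, so $\log w_{i,c(j)}$ is finite and $d_{i,j}$ is well defined.
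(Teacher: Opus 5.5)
Your proof is correct and is essentially the paper's argument: the paper also expands the KL term to obtain $\lambda_i = 1/(1+|\mathcal{D}_i|\eta_i)$, where $\eta_i$ is the geometric mean of $w'_{i,j}$ over $\mathcal{R}_i$, and then subtracts the $j$-independent constant $A=\log\eta_i-\log(1+|\mathcal{D}_i|\eta_i)$ from $\log g_{i,j}$, which is exactly $\log C_i$ for your $C_i=(1-\lambda_i)/|\mathcal{D}_i|$. Your odds-ratio identity packages the same computation into one check, and your normalization caveat is harmless but not actually needed, since the algebra uses only the explicit KL formula, in which $\Sigma_i$ cancels.
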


\subsection{GPU-Aware Implementation} 
\label{sec:gpu}
We implement \methodname in Triton~\citep{triton-paper}, consisting of three fused kernels that serve both prefill and incremental decoding, backed by a chunk-representation cache in HBM (High Bandwidth Memory). 
The three fused kernels (one per stage) are described below, with complete algorithms deferred to App.~\ref{sec:supp-algorithms}.

\textbf{Stage 0 kernel.}
A single Triton kernel lets the learned per-head summary token $\bar{\bm{q}}^{(r)}$ attend to its keys via an online softmax.
Because at this stage the keys are also used as values, the same loaded $\bm{K}$ tile will serve both purposes, so the chunk representation $\bar{\bm{k}}_c^{(r)}$ in Eq.~\eqref{eq:chunk_rep} is produced in a single pass with no extra HBM round-trip.
Once a chunk is complete, it is written back, persisted in the chunk representation cache, and reused across all subsequent query positions and decoding steps.

\textbf{Stage 1 kernel.}
The chunk logits $\bm{z}_i^{(h)}$ are formed by attending to the cached chunk representations.
Since the number of chunks $T_c = \lfloor n/B\rfloor$ is small (e.g., $T = 256$ for $n = 16$K with $B = 64$), the entire chunk-score row typically fits in registers, and the entmax threshold $\tau$ is solved in place by the AdaSplash-2 solver~\citep{goncalves2026adasplash2}.
For GQA, we then average the entmax probabilities across each group's query heads before pruning, keeping the support consistent within the KV group.
The sparse support $\mathcal{S}_i$ is stored as a bitpacked block mask $\bm{M}_i \in \{0,1\}^{h_{kv} \times T_c}$, packing 32 consecutive column chunks per \texttt{int32}, while the per-chunk routing bias $d_{i,j}$ is materialized alongside for use in Stage~2.

\textbf{Stage 2 kernel.}
A masked FlashAttention kernel iterates over the chunks marked active in $\bm{M}$, walking the bitpacked mask.
For each selected chunk, we add the per-chunk routing bias to the attention logits before the online softmax, in a single fused pass over $\mathcal{S}_i$, keeping the kernel fully compatible with FlashAttention.
For incremental decoding, a split-KV variant partitions $\bm{M}$ across thread blocks along the KV dimension, exposing enough parallelism to saturate the GPU for a single query~\citep{dao2022flashattention}.
For training, the backward pass can reuse $\bm{M}$ thanks to the sparse $\alpha$-entmax Jacobian~\citep{peters-etal-2019-sparse}.

\section{Theoretical Analysis}
\label{sec:theory}

As shown in~\citep{vasylenko2026longcontext, goncalves2025adasplash, goncalves2026adasplash2}, softmax attention suffers from \emph{dispersion} in long-context settings: the Shannon entropy of its attention distribution $\mathcal{H}(\bm p)$ grows as $\lim_{n\to\infty}\frac{\mathcal{H}(\bm p)}{\log n} = 1$ with sequence length $n$, 
making long-range dependency modeling increasingly difficult. 
Top-$k$ sparse attention mitigates this issue by bounding the entropy by $\log k$.
However, existing hierarchical sparse attention methods~\citep{yuan-etal-2025-native, zhao2026infllmv} combine GQA with post-softmax head aggregation before top-$k$ selection, allowing dispersion to re-emerge during aggregation. We formalize this property as follows:

\begin{definition}[Head aggregation]
\label{de:head-aggregation}
Given $f:\mathbb{R}^n\to \simplex_n$, and $H$ bounded sequence $\{\bm z^{(h)}\}_{h=1}^H\subset \mathbb{R}^n$, and the aggregation weights $\theta = (\theta_1, \cdots, \theta_H)^\top\in \simplex_{H}$, the aggregated probability is given by
$
\operatorname{aggr}_f\left(\bm z^{(1)}, \bm z^{(2)}, \cdots, \bm z^{(H)};\bm \theta\right) = \sum_{h=1}^H \theta_h\cdot f\left(\bm z^{(h)}\right)\in \simplex_n.
$
\end{definition}

Thus, softmax head aggregation is dispersive by construction, whereas the entmax head aggregation in \methodname is non-dispersive.

\begin{theorem}[Dispersion in head aggregation, Informal]
\label{th:dispersion-head-aggregation}
Under any finite $H$ and $\theta\in \simplex_{H}$:

1. Softmax head aggregation, i.e., $f=\operatorname{softmax}$, is dispersive; 

2. Denote $\bm p^{(h)} = \alpha\operatorname{-entmax}(\bm z^{(h)})$. If $\|\bm p^{(h)}\|_0 = \mathcal{O}\left(n^{\beta_h}\right)$ with $\beta_h\in (0, 1)$, then entmax head aggregation, i.e., $f=\alpha\operatorname{-entmax}$, is not dispersive.
\end{theorem}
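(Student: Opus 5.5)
We take \emph{dispersive} to mean what Section~\ref{sec:theory} uses for softmax: a family of distributions $\bm p\in\simplex_n$ is dispersive if $\lim_{n\to\infty}\mathcal{H}(\bm p)/\log n = 1$. It is non-dispersive if $\limsup_{n\to\infty}\mathcal{H}(\bm p)/\log n<1$. The bounded-logits assumption in Definition~\ref{de:head-aggregation} is read as uniform boundedness in $n$: there is $M$ with $|z^{(h)}_j|\le M$ for all $h,j,n$. Both parts then reduce to entropy bounds on the mixture $\bar{\bm p}=\sum_h\theta_h f(\bm z^{(h)})$, using two standard facts. First, Shannon entropy is concave, so $\mathcal{H}(\bar{\bm p})\ge\sum_h\theta_h\mathcal{H}(f(\bm z^{(h)}))$. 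Second, entropy is bounded by the log of the support size, $\mathcal{H}(\bar{\bm p})\le\log\|\bar{\bm p}\|_0$.

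\textbf{Part 1 (softmax).} Fix a head $h$. Bounded logits give
\[
\operatorname{softmax}(\bm z^{(h)})_j\le e^{2M}/n \quad\text{for every } j,
\]
so $-\log p_j\ge\log n-2M$ and hence $\mathcal{H}(\operatorname{softmax}(\bm z^{(h)}))\ge\log n-2M$. By concavity, $\mathcal{H}(\bar{\bm p})\ge\log n-2M$, using $\sum_h\theta_h=1$. Combined with the trivial bound $\mathcal{H}(\bar{\bm p})\le\log n$, dividing by $\log n$ gives limit $1$. If we want to cite rather than redo this, the per-head step is exactly the softmax dispersion result quoted from \citep{vasylenko2026longcontext}. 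Neither step uses finiteness of $H$.

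\textbf{Part 2 (entmax).} The support of a mixture is contained in the union of the supports, so $\|\bar{\bm p}\|_0\le\sum_{h=1}^H\|\bm p^{(h)}\|_0\le \sum_h C_h n^{\beta_h}\le HC\,n^{\beta^*}$, where $\beta^*=\max_h\beta_h<1$ and $C=\max_h C_h$. Here finiteness of $H$ is essential: it makes $\beta^*<1$ and keeps the constant $HC$ independent of $n$. Then
\[
\mathcal{H}(\bar{\bm p})\le\log(HC)+\beta^*\log n,
\]
so $\limsup_{n\to\infty}\mathcal{H}(\bar{\bm p})/\log n\le\beta^*<1$, and the aggregation is not dispersive. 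Heads with $\theta_h=0$ only shrink the support, so they cause no trouble.

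\textbf{Main obstacle.} The mathematics is short; the real work is fixing the formal definition of dispersion (limit versus $\limsup$, and uniformity of the logit bound in $n$) so that both parts are stated against the same notion. The hypothesis $\|\bm p^{(h)}\|_0=\mathcal{O}(n^{\beta_h})$ is assumed, not derived. If a justification is wanted, one can add a remark that bounded logits together with the entmax threshold structure (for $\alpha>1$) can yield sublinear support in appropriate regimes, citing \citep{vasylenko2026longcontext}. I would not attempt to prove that remark here.
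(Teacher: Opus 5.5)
Your proof is correct and follows the paper's argument. For Part 1 you use concavity of entropy plus per-head softmax dispersion, and for Part 2 you use the union-of-supports bound $\|\bar{\bm p}\|_0\le\sum_h\|\bm p^{(h)}\|_0$ followed by an entropy-versus-support bound. The only difference is that you prove the two ingredients the paper cites from \citep{vasylenko2026longcontext} directly, namely the pointwise bound $p_j\le e^{2M}/n$ and $\mathcal{H}\le\log\|\bar{\bm p}\|_0$, which also avoids the paper's use of $\lim$ where $\liminf$ is meant.
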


Since softmax head aggregation is dispersive, it can undermine the non-dispersive property of top-$k$ sparse attention and lead to noisy selection. In contrast, \methodname performs entmax head aggregation directly on sparse entmax scores, mitigating this issue and perform better on challenging retrieval tasks, such as MK1--MK3 in Table~\ref{tab:results-ruler}. We provide a detailed proof in App.~\ref{sec:non-dispersion}.

\section{Experiments}
\label{sec:experiments}

We comprehensively evaluate \methodname from multiple perspectives. 
First, we assess its task performance across representative benchmarks 
(Sec.~\ref{subsec:performance}). 
Second, we analyze its computational efficiency (kernel inference speed) 
under different inference settings (Sec.~\ref{subsec:efficiency_bench}). 
Third, we examine the effectiveness--efficiency tradeoff induced by different sparsity configurations 
and compare the resulting Pareto frontiers against relevant baselines 
(Sec.~\ref{subsec:pareto-frontiers}). 
Finally, we study the dynamic sparsity allocation behavior of \methodname through ablations 
to better understand how it distributes computation across layers 
(Sec.~\ref{subsec:ablation_studies}).

\begin{table*}[t]
    \small
    
    \centering
    \setlength{\tabcolsep}{3.5pt} 
    \caption{Results on RULER with a 16K context length. Best results in bold.}
    \label{tab:results-ruler}
    \resizebox{\textwidth}{!}{
    \begin{tabular}{llccccccccccccccc}
    \toprule
    \multirow{2}{*}{{}} & \multirow{2}{*}{Method} & \multicolumn{13}{c}{RULER-16K} & \multirow{2}{*}{\makecell{Avg. $\uparrow$\\(\%)}} & \multirow{2}{*}{\makecell{Sparsity\\(\%)}} \\
    \cmidrule{3-15}
        &  & \textit{SG1} & \textit{SG2} & \textit{SG3} & \textit{MK1} & \textit{MK2} & \textit{MK3} & \textit{MV} & \textit{MQ} & \textit{VT} & \textit{CWE} & \textit{FWE} & \textit{QA1} & \textit{QA2} &  &  \\
    \midrule
    \rowcolor{black!5}
    \cellcolor{white}
    \multirow{4}{*}{\texttt{1B}} &
    FullAttn & 100.0 & 100.0 & 100.0 & 98.0 & 78.0 & 48.0 & 87.0 & 89.0 & 17.6 & 1.8 & 66.7 & 44.0 & 36.0 & 66.6 & 0.0\\
    & NSA & 98.0 & 92.0 & 86.0 & 60.0 & 20.0 & 6.0 & 56.0 & 53.5 & 25.6 & \textbf{1.6} & 60.7 & 32.0 & \textbf{36.0} & 48.3 & 75.0 \\
    & InfLLMv2 & \textbf{100.0} & \textbf{100.0} & \textbf{100.0} & 92.0 & 52.0 & 16.0 & \textbf{94.0} & \textbf{90.0} & 40.8 & 1.0 & 59.3 & \textbf{36.0} & 30.0 & 62.4 & 75.0 \\
    \rowcolor{white}
    \cellcolor{white} & \methodname & \textbf{100.0} & \textbf{100.0} & 96.0 & \textbf{94.0} & \textbf{70.0} & \textbf{26.0} & 89.5 & 87.5 & \textbf{46.0} & 0.8 & \textbf{63.3} & \textbf{36.0} & 34.0 & \textbf{64.9} & \textbf{75.7}
    \\
    \midrule
    \rowcolor{black!5}
    \cellcolor{white}
    \multirow{4}{*}{\texttt{3B}} &
    FullAttn & 100.0 & 100.0 & 98.0 & 100.0 & 94.0 & 60.0 & 96.0 & 96.5 & 20.4 & 5.8 & 19.3 & 60.0 & 50.0 & 69.2 & 0.0\\
    & NSA & \textbf{100.0} & 92.0 & 90.0 & 66.0 & 24.0 & 10.0 & 64.5 & 62.0 & 19.2 & 4.2 & 28.7 & 36.0 & 40.0 & 49.0 & 75.0 \\
    & InfLLMv2 & \textbf{100.0} & \textbf{100.0} & 98.0 & 94.0 & 56.0 & 28.0 & 95.5 & \textbf{97.0} & \textbf{22.8} & \textbf{5.0} & 16.0 & \textbf{56.0} & \textbf{48.0} & 62.8 & 75.0\\
    \rowcolor{white}
    \cellcolor{white} & \methodname & \textbf{100.0} & \textbf{100.0} & \textbf{100.0} & \textbf{100.0} & \textbf{88.0} & \textbf{40.0} & \textbf{96.5} & 95.5 & 21.2 & 1.6 & \textbf{42.7} & 52.0 & 42.0 & \textbf{67.7} & \textbf{75.4}\\
    \midrule
    \rowcolor{black!5}
    \cellcolor{white}
    \multirow{4}{*}{\texttt{8B}} &
    FullAttn & 100.0 & 100.0 & 100.0 & 98.0 & 100.0 & 96.0 & 100.0 & 99.5 & 58.4 & 49.2 & 78.0 & 78.0 & 52.0 & 85.3 & 0.0 \\
    & NSA & \textbf{100.0} & 82.0 & 68.0 & 60.0 & 34.0 & 12.0 & 73.5 & 74.5 & 30.0 & 42.8 & 68.7 & 36.0 & 34.0 & 55.0 & 75.0 \\
    & InfLLMv2 & \textbf{100.0} & \textbf{100.0} & \textbf{100.0} & \textbf{98.0} & 82.0 & 52.0 & 96.0 & 98.0 & 55.6 & 42.4 & \textbf{77.3} & 66.0 & \textbf{58.0} & 78.9 & 75.0 \\
    \rowcolor{white}
    \cellcolor{white}
    & \methodname & \textbf{100.0} & \textbf{100.0} & \textbf{100.0} & \textbf{98.0} & \textbf{96.0} & \textbf{86.0} & \textbf{100.0} & \textbf{99.0} & \textbf{60.0} & \textbf{49.6} & 72.0 & \textbf{72.0} & 54.0 & \textbf{83.6} & \textbf{75.7}\\
    \bottomrule
    \end{tabular}
    }
    \end{table*}

\begin{table*}[t]
    \small
        
    \centering
    \setlength{\tabcolsep}{7.5pt} 
    \caption{Results on HELMET with a 16K context length. Best results in bold.}
    \label{tab:results-helmet}
    \resizebox{\textwidth}{!}{
    \begin{tabular}{llccccccccc}
    \toprule
     & \multirow{2}{*}{Method} & \multicolumn{7}{c}{{{HELMET}-16K}} & \multirow{2}{*}{\makecell{Overall $\uparrow$\\(\%)}} & \multirow{2}{*}{\makecell{Sparsity\\(\%)}} \\
    \cmidrule(lr){3-9}
    & & Recall & ICL & Rerank & RAG & LongQA & Cite & Summ. & & \\
    \midrule
    \multirow{4}{*}{\texttt{1B}}
    & \cellcolor{black!5}FullAttn & \cellcolor{black!5}66.6 & \cellcolor{black!5}59.0 & \cellcolor{black!5}13.9 & \cellcolor{black!5}34.5 & \cellcolor{black!5}40.1 & \cellcolor{black!5}8.9 & \cellcolor{black!5}4.2 & \cellcolor{black!5}32.5 & \cellcolor{black!5}0.0 \\
    & NSA & 23.4 & 46.0 & 15.0 & 35.0 & 34.6 & 6.4 & 4.4 & 23.6 & 75.0 \\
    & InfLLMv2 & 43.7 & 59.2 & 15.5 & 37.0 & \textbf{38.7} & \textbf{7.6} & 4.9 & 29.5 & 75.0 \\
    & \cellcolor{white}\methodname & \cellcolor{white}\textbf{51.8} & \cellcolor{white}\textbf{61.0} & \cellcolor{white}\textbf{16.5} & \cellcolor{white}\textbf{38.5} & \cellcolor{white}36.4 & \cellcolor{white}7.4 & \cellcolor{white}\textbf{6.7} & \cellcolor{white}\textbf{31.2} & \cellcolor{white}\textbf{75.4} \\
    \midrule
    \multirow{4}{*}{\texttt{3B}}
    & \cellcolor{black!5}FullAttn & \cellcolor{black!5}78.6 & \cellcolor{black!5}44.8 & \cellcolor{black!5}25.3 & \cellcolor{black!5}48.5 & \cellcolor{black!5}42.5 & \cellcolor{black!5}12.3 & \cellcolor{black!5}9.7 & \cellcolor{black!5}37.4 & \cellcolor{black!5}0.0 \\
    & NSA & 29.9 & 45.2 & 20.5 & 45.5 & 39.7 & 7.3 & 7.1 & 27.9 & 75.0 \\
    & InfLLMv2 & 58.0 & 44.2 & \textbf{24.8} & \textbf{49.3} & \textbf{47.7} & 6.0 & \textbf{9.8} & 34.2 & 75.0 \\
    & \cellcolor{white}\methodname & \cellcolor{white}\textbf{66.9} & \cellcolor{white}\textbf{51.2} & \cellcolor{white}20.0 & \cellcolor{white}47.8 & \cellcolor{white}35.9 & \cellcolor{white}\textbf{10.3} & \cellcolor{white}8.3 & \cellcolor{white}\textbf{34.3} & \cellcolor{white}\textbf{75.4} \\
    \midrule
    \multirow{4}{*}{\texttt{8B}}
    & \cellcolor{black!5}FullAttn & \cellcolor{black!5}98.3 & \cellcolor{black!5}56.0 & \cellcolor{black!5}37.8 & \cellcolor{black!5}53.3 & \cellcolor{black!5}52.4 & \cellcolor{black!5}20.0 & \cellcolor{black!5}16.5 & \cellcolor{black!5}47.7 & \cellcolor{black!5}0.0 \\
    & NSA & 35.0 & 59.8 & 27.9 & 48.3 & 51.0 & 13.7 & \textbf{14.7} & 35.8 & 75.0 \\
    & InfLLMv2 & 81.4 & \textbf{63.6} & 36.8 & \textbf{54.0} & \textbf{53.2} & \textbf{18.3} & 13.7 & 45.9 & 75.0 \\
    & \cellcolor{white}\methodname & \cellcolor{white}\textbf{88.3} & \cellcolor{white}62.8 & \cellcolor{white}\textbf{39.7} & \cellcolor{white}\textbf{54.0} & \cellcolor{white}52.3 & \cellcolor{white}17.4 & \cellcolor{white}13.7 & \cellcolor{white}\textbf{46.9} & \cellcolor{white}\textbf{75.4} \\
    \bottomrule
    \end{tabular}
    }
    \end{table*}

\subsection{Task Performance}
\label{subsec:performance}

\paragraph{Baselines and Experimental Setup.} 
Besides full attention (FullAttn), we compare~\methodname with other GQA-based hierarchical softmax attention methods under long-context adaptation, namely
NSA~\citep{yuan-etal-2025-native} and InfLLMv2~\citep{zhao2026infllmv}, and follow the same long-context adaptation setting for all methods.
Concretely, we start with models pretrained with softmax full attention, continue pretraining them on 16K long-context data (InfLLMv2-5B dataset~\citep{openbmb_InfLLM_V2_data_5B_2025}), and then apply a short SFT stage (MiniCPM-4 Dataset~\citep{team2025minicpm4}). We use the 1B, 3B, and 8B variants of MiniCPM-4~\citep{team2025minicpm4} as base models. During training, we gradually increase $\alpha$ from $1.25$ to $1.5$, and use $\alpha=1.5$ for inference. For NSA and InfLLMv2 baseline models, we keep the sparsity to $75\%$ during training.
Full training details are in Appendix.~\ref{sec:long-context-adaption-training-appendix}.

\paragraph{Benchmark Datasets.}
We evaluate long-context performance on RULER~\citep{hsieh2024ruler} and HELMET~\citep{yen2025helmet}. 
Given the importance of matching sparsity ratios~\citep{nawrot2026sparsefrontiersparseattention}, for NSA and InfLLMv2, we set the attention sparsity to $75\%$ during both training and inference, whereas for~\methodname we tune the Stage~1 factor $\gamma$ to match a comparable sparsity level. 
To assess general capabilities, we further evaluate on MMLU~\citep{wang2024mmlu}, MMLU-Pro~\citep{hendrycks2020measuring}, CSQA~\citep{talmor2019commonsenseqa}, IFEval~\citep{zhou2023instruction}, HellaSwag~\citep{zellers2019hellaswag}, GSM8K~\citep{cobbe2021training}, MATH~\citep{hendrycksmath2021}, DROP~\citep{dua2019drop}, MBPP~\citep{austin2021program}, and HumanEval~\citep{chen2021codex}. Additional details are provided in App.~\ref{sec:long-context-adaption-benchmarks-appendix}. 
\looseness=-1

\begin{table*}[t]
    \centering
    \small
    \setlength{\tabcolsep}{4pt}
    \caption{General task performance with 8B models. Best results in bold.}
    \label{tab:general-tasks}
    \resizebox{\textwidth}{!}{
    \begin{tabular}{lccccccccccc}
      \toprule
      Method & MMLU & MMLU-Pro & CSQA & IFEval & HellaSwag & GSM8K & MATH & DROP & MBPP & HumanEval & Avg. \\
      \midrule
      \rowcolor{black!5}
      FullAttn & 73.6 & 46.1 & 80.0 & 80.0 & 72.9 & 76.3 & 22.2 & 11.3 & 63.0 & 69.5 & 59.5 \\
      InfLLMv2 & 73.2 & 45.6 & \textbf{80.6} & 79.0 & 72.8 & \textbf{77.1} & 22.2 & 10.3 & 62.4 & 67.7 & 59.1 \\
      NSA & 73.2 & \textbf{46.3} & 80.4 & 80.0 & 72.5 & 75.1 & \textbf{22.4} & 10.5 & 61.6 & \textbf{70.1} & 59.2 \\
      \rowcolor{white}
      \methodname & \textbf{73.5} & 46.2 & 80.3 & \textbf{80.7} & \textbf{73.1} & 76.8 & 22.2 & \textbf{10.7} & \textbf{63.0} & 67.7 & \textbf{59.4} \\
      \bottomrule
    \end{tabular}
    }
  \end{table*}

\paragraph{Long-Context Performance.}
Table~\ref{tab:results-ruler} (RULER) and Table~\ref{tab:results-helmet} (HELMET) show that~\methodname consistently outperforms NSA and InfLLMv2 in terms of both performance and sparsity. In particular, results on RULER and the Recall task in HELMET demonstrate that~\methodname exhibits a significantly stronger capability in challenging long-context retrieval tasks. Moreover, \methodname achieves better performance than the baselines across multiple model sizes and evaluation tasks. 
We further show in Table~\ref{tab:sp-tr-ds-inf} that~\methodname can also inference with softmax full attention.
\looseness=-1

\begin{wraptable}[11]{r}{0.4\textwidth}
  \centering
  \small
  \setlength{\tabcolsep}{6pt} 
  \vspace{2pt}
  \caption{
    Long-context performance of \methodname (DA) models with softmax full attention (FA).
  }
  \label{tab:sp-tr-ds-inf}
  \resizebox{0.4\textwidth}{!}{
  \begin{tabular}{lccccc}
  \toprule
  & \multicolumn{2}{c}{RULER} & \multicolumn{2}{c}{HELMET}  \\
  & FA & DA+FA & FA & DA+FA \\
  \midrule
\texttt{1B} & 66.6 & \textbf{70.4} & 32.5 & \textbf{33.8} \\
\texttt{3B} & 69.2 & \textbf{71.9} & \textbf{37.4} & 37.0 \\
\texttt{8B} & 85.3 & \textbf{86.7} & 47.7 & \textbf{48.7} \\

  \bottomrule \\
  \end{tabular}
  }
\end{wraptable}

\paragraph{\methodname with Softmax Decoding.} 
To close the gap with standard full softmax attention, we first investigate the impact of the chunk size in Stage 0 in App. \ref{sec:supp-chunk-sizes} (Table~\ref{tab:chunk-size-full}), where we find that reducing chunk size naturally leads to better results. However, this also reduces efficiency, as if chunk size is 1 (no chunking), \methodname reverts to \entmax attention followed by softmax attention, and thus it incurs the total cost of both at inference time. 
An alternative is to simply use softmax at test time without Stage~0 and Stage~1, where highly optimized implementations already exist (e.g., in vLLM~\citep{kwon2023efficient} and SGLang~\citep{zheng2024sglang}).
Table~\ref{tab:sp-tr-ds-inf} reports the results of training with DA and inference with FA, showing even higher performance than FA-trained models.

\paragraph{General Task Performance.}

We further evaluate~\methodname and the baselines on short-context general tasks to verify that~\methodname does not notably degrade short-context performance. During inference, both~\methodname and InfLLMv2 use softmax full attention, while NSA uses compressed attention without sparsity. As shown in Table~\ref{tab:general-tasks}, \methodname achieves performance on par with the original Full Attention model, and slightly outperforms NSA and InfLLMv2.

\subsection{Efficiency Benchmark}
\label{subsec:efficiency_bench}

We benchmark \methodname's two-stage kernels in two regimes: \emph{prefill} and \emph{decoding}. We compare against \textbf{FlashAttention3}~\citep{shah2024flashattention}, \textbf{NSA}~\citep{yuan-etal-2025-native},\footnote{No official NSA implementation has been released; we thus use a open-source Triton implementation at \url{https://github.com/XunhaoLai/native-sparse-attention-triton}.} and \textbf{InfLLMv2}~\citep{zhao2026infllmv}. All methods use a chunk size of $64$. We sweep chunk sparsity $s \in \{75\%, 87.5\%, 93.75\%\}$, with corresponding routing top-$k$ equal to $(1 - s) \cdot N / 64$; the \methodname active-block bitmask is randomized to the target sparsity to isolate the last stage's cost. Prefill is run at batch size $1$ and decoding at batch size $24$. See App.~\ref{sec:efficiency-appendix} for the full benchmark setup, including hardware and motivation for these batch sizes.

\paragraph{Prefill.}
\methodname is the fastest method at every operating point (Table~\ref{tab:hasa-efficiency}, left), with speedups over dense FlashAttention ranging from $1.34\times$ to $3.09\times$. The gap over InfLLMv2 and NSA is widest at the densest setting, where these methods' top-$k$ overhead is least amortized, and narrows toward extreme sparsity, but \methodname stays ahead in every cell.

\paragraph{Decoding.}
In the memory-bound decoding regime (Table~\ref{tab:hasa-efficiency}, right), \methodname is again the fastest at every operating point, reaching $3.36\times$ over FlashAttention at $n_{kv}{=}96$K, $s{=}93.75\%$, against $3.10\times$ for InfLLMv2. Compared to FA, the advantage grows monotonically with both context length and sparsity as the cost of dense attention eclipses Stage~1's routing overhead. Compared to InfLLMv2, this result follows from \methodname's Stage~2 (Sec.~\ref{sec:gpu}): we walk the bit-packed active-block mask in a single fused pass, avoiding the explicit top-$k$ and per-query index materialization InfLLMv2 performs between its scoring and attention stages.

\begin{table}[t]
  \centering
  \caption{Wall-clock speedup over full attention with FlashAttention (higher is better). Columns are chunk sparsity $s$, the fraction of chunks not routed to Stage~2; the corresponding top-$k$ is $(1 - s) \cdot N / 64$. Iv2 denotes InfLLMv2 and DA denotes \methodname. Best results in bold.}
  \label{tab:hasa-efficiency}
  \small
  \setlength{\tabcolsep}{5pt}
  \resizebox{\textwidth}{!}{%
  \begin{tabular}{l ccc ccc ccc @{\hspace{1.4em}} ccc ccc ccc}
    \toprule
        & \multicolumn{9}{c}{\textsc{Prefill}}
        & \multicolumn{9}{c}{\textsc{Decoding}} \\
    \cmidrule(lr){2-10} \cmidrule(lr){11-19}
        & \multicolumn{3}{c}{$s=75\%$}
        & \multicolumn{3}{c}{$s=87.5\%$}
        & \multicolumn{3}{c}{$s=93.7\%$}
        & \multicolumn{3}{c}{$s=75\%$}
        & \multicolumn{3}{c}{$s=87.5\%$}
        & \multicolumn{3}{c}{$s=93.7\%$} \\
    \cmidrule(lr){2-4}\cmidrule(lr){5-7}\cmidrule(lr){8-10}
    \cmidrule(lr){11-13}\cmidrule(lr){14-16}\cmidrule(lr){17-19}
    $N$ 
        & NSA & Iv2 & DA
        & NSA & Iv2 & DA
        & NSA & Iv2 & DA
        & NSA & Iv2 & DA
        & NSA & Iv2 & DA
        & NSA & Iv2 & DA \\
    \midrule
    $16$K
        & 0.71 & 0.96 & \textbf{1.34}
        & 1.23 & 1.63 & \textbf{2.09}
        & 1.91 & 2.40 & \textbf{2.87}
        & 0.40 & 1.20 & \textbf{1.54}
        & 0.43 & 1.43 & \textbf{1.67}
        & 0.46 & 1.73 & \textbf{1.93} \\
    $32$K
        & 0.76 & 1.04 & \textbf{1.39}
        & 1.25 & 1.67 & \textbf{2.11}
        & 2.19 & 2.78 & \textbf{2.94}
        & 0.57 & 1.46 & \textbf{1.69}
        & 0.65 & 1.85 & \textbf{2.26}
        & 0.73 & 2.29 & \textbf{2.69} \\
    $64$K
        & 0.82 & 1.01 & \textbf{1.55}
        & 1.33 & 1.77 & \textbf{2.28}
        & 2.24 & 2.86 & \textbf{3.02}
        & 0.71 & 1.59 & \textbf{1.97}
        & 0.95 & 2.28 & \textbf{2.69}
        & 1.12 & 2.92 & \textbf{3.27} \\
    $96$K
        & 0.82 & 1.04 & \textbf{1.63}
        & 1.31 & 1.73 & \textbf{2.34}
        & 2.32 & 3.06 & \textbf{3.09}
        & 0.80 & 1.73 & \textbf{1.96}
        & 1.04 & 2.38 & \textbf{2.72}
        & 1.34 & 3.10 & \textbf{3.36} \\
    \bottomrule
  \end{tabular}%
  }
\end{table}

\subsection{Cost-Effectiveness Analysis}
\label{subsec:pareto-frontiers}

We combine performance and efficiency to examine each method's accuracy--sparsity trade-off. \pgfplotsset{compat=1.18}

\definecolor{adansaColor}{HTML}{E67373}
\definecolor{infllmColor}{HTML}{64B5F6}
\definecolor{nsaColor}{HTML}{81C784}
\definecolor{fullAttnColor}{HTML}{555555}

\begin{wrapfigure}{r}{0.4\linewidth}
    \centering
    \vspace{-1em}

\begin{tikzpicture}
\begin{axis}[
    width=1\linewidth,
    height=0.7\linewidth,
    xmin=56,
    xmax=92,
    enlarge x limits=false,
    ymin=15,
    ymax=50,
    xtick={60,70,80,90},
    ytick={20,30,40,50},
    xlabel={Sparsity $\to$ (\%)},
    xlabel style={yshift=0.4em},
    ylabel={Overall Accuracy $\to$ (\%)},
    label style={font=\scriptsize},
    ylabel style={yshift=-0.4em},
    tick label style={font=\scriptsize},
    grid=major,
    grid style={gray!18},
    axis line style={black!70},
    tick style={black!70},
    major tick length=2pt,
    minor tick length=1pt,
    legend style={
        font=\scriptsize,
        draw=none,
        fill=none,
        at={(0.0,0.0)},
        anchor=south west,
        legend columns=1,
        row sep=-2pt,
        inner ysep=0pt,
        cells={anchor=west},
    },
    every axis plot/.append style={
        line width=1.4pt,
        mark=*,
        mark size=2.2pt,
        mark options={solid, fill=white}
    },
]

\addplot[color=fullAttnColor, dashed, mark=none, line width=1.2pt] coordinates {
    (56,45.096842)
    (92,45.096842)
};
\addlegendentry{FullAttn}

\addplot[color=nsaColor, mark options={solid, fill=nsaColor}] coordinates {
    (60.156250,39.937834)
    (69.921875,35.415457)
    (80.078125,30.298355)
    (89.843750,20.066030)
};
\addlegendentry{NSA}

\addplot[color=infllmColor, mark options={solid, fill=infllmColor}] coordinates {
    (60.156250,46.056438)
    (69.921875,44.517904)
    (80.078125,41.352660)
    (89.843750,30.473025)
};
\addlegendentry{InfLLMv2}

\addplot[color=adansaColor, mark options={solid, fill=adansaColor}] coordinates {
    (58.526129,45.694452)
    (67.651314,45.339642)
    (81.542443,43.023605)
    (88.453271,39.433376)
};
\addlegendentry{\methodname}

\end{axis}
\end{tikzpicture}
    \vspace{-8pt}
    \caption{Accuracy--Sparsity Pareto frontiers on HELMET.}
    \label{fig:pareto}
    \vspace{-1em}
\end{wrapfigure}

For the 8B model, we sweep the sparsity ($\gamma$ for \methodname\footnote{While $\alpha$ in \entmax also controls sparsity, we choose to adjust a temperature $\gamma$ instead as it allows us to re-use kernels optimized for specific values of $\alpha$, such as $\alpha=1.5$ in our case, which enjoys fast convergence and reduces all expressions to linear or quadratic operations; thus avoiding exponentials and logarithms and making the overall method faster.}, $k$ for NSA and InfLLMv2) of each sparse method to obtain points at increasing sparsity levels and report the resulting HELMET overall accuracy.
\methodname dominates NSA and InfLLMv2 across the sweep (Fig.~\ref{fig:pareto}) and slightly exceeds full attention at low to moderate sparsity.
As sparsity increases, the gap to the baselines widens sharply: at $\sim$90\% sparsity, \methodname retains 39.4\% overall accuracy, exceeding InfLLMv2 by 9\% and NSA by 19\%. 
This highlights \methodname's adaptiveness: fixed top-$k$ over-allocate easy queries or under-allocate hard ones, whereas \entmax reshapes the support adaptively.

\subsection{Dynamic Sparsity Analysis}
\label{subsec:ablation_studies}

\pgfplotsset{compat=1.18}

\definecolor{sparsityColor}{HTML}{E57373}

\begin{wrapfigure}[9]{r}{0.4\textwidth}
\centering
\vspace{-10pt}
\hspace*{-6pt}%
\begin{tikzpicture}
\begin{axis}[
    width=\linewidth,
    height=0.68\linewidth,
    ybar,
    bar width=2.4pt,
    xmin=-1,
    xmax=32,
    ymin=0.5,
    ymax=0.87,
    xtick={0,5,10,15,20,25,30},
    ytick={0.5,0.6,0.7,0.8},
    xtick pos=bottom,
    ytick pos=left,
    xlabel={Layer},
    ylabel={Sparsity},
    xlabel style={yshift=4pt},
    ylabel style={at={(axis description cs:-0.13,0.5)}, anchor=south},
    label style={font=\scriptsize},
    tick label style={font=\scriptsize},
    grid=major,
    grid style={gray!25, dashed},
    major grid style={draw=gray!35, dashed},
    axis line style={black!70},
    tick style={black!70, major tick length=1.4pt},
]

\addplot[
    fill=sparsityColor,
    draw=black,
    line width=0.35pt
] coordinates {
    (0,0.5370) (1,0.6674) (2,0.6749) (3,0.7474)
    (4,0.6942) (5,0.7739) (6,0.7961) (7,0.8389)
    (8,0.7728) (9,0.7363) (10,0.7691) (11,0.7378)
    (12,0.8151) (13,0.7676) (14,0.7824) (15,0.8233)
    (16,0.7156) (17,0.7984) (18,0.7001) (19,0.7750)
    (20,0.8335) (21,0.8240) (22,0.7087) (23,0.6487)
    (24,0.8312) (25,0.7874) (26,0.7246) (27,0.8077)
    (28,0.7826) (29,0.8252) (30,0.7817) (31,0.7689)
};

\end{axis}
\end{tikzpicture}
\vspace{-10pt}
\caption{Per-layer attention sparsity.}
\vspace{-10pt}
\label{fig:layer-sparsity}
\end{wrapfigure}

Since~\methodname uses \entmax in Stage 1 to induce sparsity, it can dynamically allocate sparsity across different layers according to the geometry of the Stage 1 scores. 
To show this property, we measure the sparsity of each layer using a 16K-length input from RULER-SG1. The analysis is shown in Fig.~\ref{fig:layer-sparsity}.
Notably, the early layers tend to remain denser, while the middle layers become sparser, automatically producing an effect similar to budget allocation strategies proposed in prior works~\citep{cai2024pyramidkv, yang2024pyramidinfer, lu2025moba}.

\section{Related Work}
\label{sec:related_works}

\paragraph{Attention and KV Cache Optimizations.}
Attention sparsification methods are developed to mitigate the computational and memory bottlenecks of long-context attention.
Early approaches use static patterns, such as attention sinks~\citep{xiao2024efficient} and sliding windows~\citep{Beltagy2020Longformer}, to approximate full attention, while later works introduce random~\citep{zaheer2020bigbird} and dynamic patterns~\citep{zhang2023h2o, jiang2024minference, kitaev2020reformer, zhang2025spargeattention}, e.g., block sparsity~\citep{jiang2024minference, tang2024quest}, to better exploit semantic structure.
Prior methods further exploit head heterogeneity by assigning different sparse patterns to different attention heads~\citep{jiang2024minference, ge2023model, xiao2024duoattention, lin2026lycheedecode}, improving long-context modeling.
Although these optimizations significantly improve efficiency, they introduce a training-inference mismatch.
In parallel, KV cache optimization methods have been proposed to reduce memory access cost and GPU memory usage.
Eviction methods~\citep{zhang2023h2o, li2024snapkv, cai2025r, bai2026indexcache} permanently discard redundant KV pairs, while KV offloading methods~\citep{xiao2024infllm, sun2024shadowkv, huang2025nosa} store most KV cache on CPU and fetch only query-relevant parts during decoding. KV cache compression can also be learned end-to-end through continued training \citep{huang2025nosa, dmc,lancucki2026inferencetime}.
KV quantization methods~\citep{hooper2024kvquant, liu2024kivi, zhang2025pqcache} further reduce memory footprint by storing KV cache in low-bit formats.
These approaches are orthogonal to our method, as they primarily compress KV cache rather than optimize attention sparsity.

\paragraph{Trainable Sparse Attentions.}
Recent approaches~\citep{li2025mtraining, gao2025seerattention} integrate sparse attention into LLM training to reduce training--inference mismatch and improve task performance.  
SeerAttn~\citep{gao2024seerattention}, NSA~\citep{yuan-etal-2025-native}, and MoBA~\citep{lu2025moba} introduce trainable block-sparse attention into LLMs, using top-$k$ selection over compressed attention scores to identify important blocks.  
InfLLMv2~\citep{zhao2026infllmv} improves the short-context inference efficiency of NSA by introducing a unified kernel in the final stage, while FSA~\citep{yan2025flash} further extends this design to smaller GQA group sizes.  
HSA~\citep{liuhsa} introduces a local encoder to summarize each chunk, but adds substantial parameters and is less compatible with pretrained softmax models during long-context continual training.  
Other token-wise sparse attention methods, such as DSA~\citep{deepseekai2025deepseekv32pushingfrontieropen} and CSA~\citep{deepseek2026v4}, are highly effective only at large model scales and under the MLA~\citep{liu2024deepseek} architecture.
The performance of these proposed methods are bounded by a fixed top-$k$ function, which does not introduce any dynamic property~\citep{zhang2026spargeattention2, ni2026double}.
Another line of work seeks to accelerate \entmax for modern hardware~\citep{ goncalves2025adasplash, goncalves2026adasplash2} .
However, they still require computing the full $\bm Q\bm K^\top$ tensor, drastically limiting inference acceleration.  
Our method \methodname bridges these two directions by introducing \entmax into hierarchical sparse attention while remaining easy to adapt from pretrained softmax models.

\section{Conclusion}
\label{sec:conclusion}

We propose \methodname, an end-to-end differentiable and adaptively sparse hierarchical attention mechanism. When integrated into long-context continual pretraining, \methodname outperforms existing sparse attention methods (NSA and InfLLMv2) in terms of both performance and speed. \methodname demonstrates a favorable cost-effectiveness trade-off in high-sparsity regimes and achieves speedups of 3.36$\times$ and 1.35$\times$ over FlashAttention-3 and InfLLMv2, respectively. One limitation of this work is that the \methodname kernels have not yet been integrated into modern LLM serving frameworks, such as vLLM and SGLang, which is left for future work. We will also explore applying \methodname to other model architectures, such as hybrid models~\citep{blakeman2025nemotron}.

\section*{Acknowledgments}

We would like to thank the SARDINE lab team for the helpful discussions.
This work is supported by the National Natural Science Foundation of China (No. 62576186), and Tsinghua University Initiative Scientific Research Program.
This work was supported  by the project DECOLLAGE (ERC-2022-CoG 101088763), by the Portuguese Recovery and Resilience Plan through project C645008882-00000055 (Center for Responsible AI), and by FCT/MECI through national funds and, when applicable, co-funded EU funds under DOI:10.54499/UID/50008/2025: Instituto de Telecomunicações. 
Edoardo M. Ponti is supported by the ERC Starting Grant AToM-FM (ERC-2025-StG 101222956).

\bibliographystyle{unsrt}
\bibliography{neurips_2026}

\appendix
\clearpage
\newpage

\begin{figure}[t]
    \centering
    \includegraphics[width=1\linewidth]{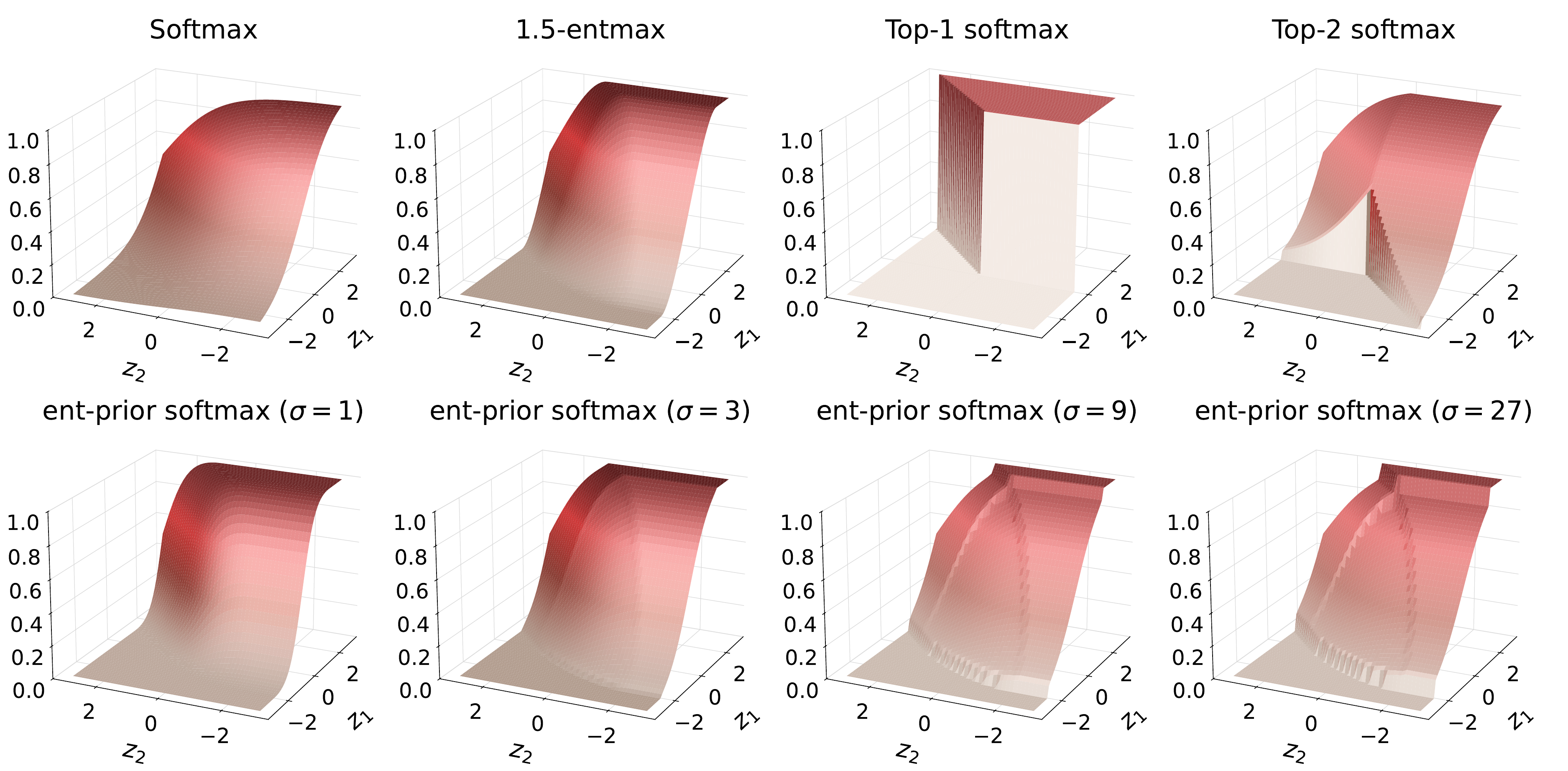}
    \caption{Visualization of mappings for different values of $\alpha$, and top-$k$ softmax with $k=1, 2$.
    Each panel shows how $p_0$ varies for the input $\bm{z} = [0, z_1, z_2]$.
    Ent-prior softmax denotes a variant of softmax in which the softmax scores are scaled by an entmax prior. The prior is also generated from $\bm z$ and normalized according to Proposition~\ref{th:attn-bias}.
    }
    \label{fig:entmax_visualizations_app}
\end{figure}

\section{The $\alpha$-entmax Mapping}
\label{section:app_alpha_entmax}

This appendix briefly reviews the $\alpha$-entmax transformation used in our coarse routing stage (Stage~1), and includes basic properties and calculation of $\alpha$-entmax.

\paragraph{Variational View of Mappings to Probability Simplex.}
Given a vector of scores $\bm{z}\in\mathbb{R}^n$, $\alpha$-entmax maps $\bm{z}$ to a probability vector by solving a regularized prediction problem over the simplex~\citep{peters-etal-2019-sparse}:
\begin{equation}
\label{eq:app_entmax_def}
\entmax(\bm{z})
=
\argmax_{\bm{p}\in\triangle_n}
\bm{p}^{\top}\bm{z} + H_{\alpha}(\bm{p}),
\qquad
\triangle_n
=
\left\{
\bm{p}\in\mathbb{R}^{n}:
\bm{p}\geq \mathbf{0},
\ \mathbf{1}^{\top}\bm{p}=1
\right\},
\end{equation}
where $H_{\alpha}$ is the Tsallis $\alpha$-entropy~\citep{Tsallis1988}:
\begin{equation}
\label{eq:app_tsallis_entropy}
H_{\alpha}(\bm{p})
=
\begin{cases}
\displaystyle
\frac{1}{\alpha(\alpha-1)}
\sum_{j=1}^{n}
\left(p_j - p_j^{\alpha}\right),
& \alpha \neq 1, \\[0.8em]
\displaystyle
-\sum_{j=1}^{n} p_j \log p_j,
& \alpha = 1.
\end{cases}
\end{equation}
The parameter $\alpha$ controls the shape of the regularizer and, consequently, the sparsity of the resulting distribution.
The optimal solution has a thresholding form. 
There exists a scalar threshold $\tau$ such that
\begin{equation}
\label{eq:app_entmax_solution}
\entmax(\bm{z})_i
=
\left[
(\alpha-1)z_i - \tau
\right]_{+}^{\frac{1}{\alpha-1}},
\end{equation}
where $\tau$ is chosen so that the entries sum to one.
Therefore, any coordinate satisfying $(\alpha-1)z_i \leq \tau$ is assigned exactly zero mass.
This property is central to our use of $\alpha$-entmax: the router does not merely downweight irrelevant chunks, but can remove them from the routed set altogether.

\paragraph{Special Cases.}
The $\alpha$-entmax family contains softmax and sparsemax as limiting or special cases.
As $\alpha \to 1^{+}$, the Tsallis entropy converges to Shannon entropy, and Eq.~\eqref{eq:app_entmax_def} recovers the standard softmax transformation.
At $\alpha=2$, the mapping becomes sparsemax~\citep{martins2016softmax}, which is the Euclidean projection of the score vector onto the simplex, i.e., $\operatorname{sparsemax}(\bm z) = \argmin_{\bm p \in \simplex_n} \|\bm z - \bm p\|_2$.
Intermediate values, such as $\alpha=1.5$, provide a useful compromise: they preserve differentiability while allowing exact zeros in the output distribution~\citep{peters-etal-2019-sparse}.

\paragraph{Computing the threshold.}
Evaluating \entmax reduces to finding the scalar threshold $\tau$ in Eq.~\eqref{eq:app_entmax_solution}.
For $\alpha=2$ and $\alpha=1.5$, specialized algorithms are available~\citep{martins2016softmax,peters-etal-2019-sparse}. For general $\alpha$, $\tau$ can be obtained by root finding, such as bisection over a bounded interval~\citep{blondel2019learning}.
Recent GPU implementations further accelerate this step with safeguarded higher-order updates~\citep{goncalves2025adasplash,goncalves2026adasplash2}.

\paragraph{Why it is useful for routing.}
In our setting, $\alpha$-entmax is applied at the chunk-routing level, where exact zeros have a direct computational interpretation: chunks assigned zero probability are not passed to the expensive second-stage attention computation.
This differs from top-$k$ softmax in two important ways.
First, top-$k$ fixes the number of selected chunks in advance, so the routing budget is an external hyperparameter rather than a consequence of the score distribution.
By contrast, $\alpha$-entmax determines the active support from the scores themselves, allowing different queries and sequences to route to different numbers of chunks.
Second, top-$k$ changes support through a discrete selection operator; within a region where the selected set is fixed, gradients do not describe how an unselected chunk should change its score to enter the routed set.
In $\alpha$-entmax, the routing probabilities arise from a continuous sparse transformation, so score changes near the boundary are reflected in the probability map rather than handled by a separate hard truncation step. Figure~\ref{fig:entmax_visualizations_app} illustrates that $\alpha$-entmax functions are smooth but cannot accelerate inference, since the full $\bm Q\bm K^\top$ matrix must still be computed. In contrast, top-$k$ softmax can accelerate inference but is non-smooth and stiff. With a suitable choice of $\sigma$, ent-prior softmax, similar to \methodname, can be smoother and more dynamic than top-$k$ softmax while still enabling inference acceleration through hierarchical attention.

\paragraph{Differentiability of \methodname.}
Entmax priors provide a native mechanism for enabling differentiability in hierarchical sparse attention. Unlike NSA, which relies on compressed attention to pass gradients to the first stage, or InfLLMv2, where the first stage is not trainable, our approach allows gradients to flow back through the routed chunks via 1.5-entmax. 
While 1.5-entmax can produce exact zeros, it admits a closed-form Jacobian within regions where the support remains fixed. At support-change boundaries, it is piecewise differentiable, analogous to sparsemax or ReLU. In practice, automatic differentiation applies the corresponding subgradient or Jacobian, enabling end-to-end gradient-based training. Therefore, from a computational perspective, DashAttention inherits the differentiability of $\alpha$-entmax and thus can be viewed as a fully end-to-end differentiable mechanism.

\section{Reproducibility Settings}
\label{sec:reproducibility}

\subsection{Performance Evaluation}
\label{sec:long-context-adaption-appendix}

\subsubsection{Training}
\label{sec:long-context-adaption-training-appendix}

We choose the 1B, 3B, and 8B base versions of MiniCPM-4~\citep{team2025minicpm4} as our backbone models, motivated by their relatively large GQA group sizes. Specifically, the 1B model uses a group size of 8, while the 3B and 8B models adopt a group size of 16.
We train all three models on InfLLM-5B~\citep{openbmb_InfLLM_V2_data_5B_2025} with a context length of 16K, followed by a short supervised fine-tuning (SFT) stage using the original MiniCPM-4~\citep{team2025minicpm4} training data. We employ the WSD learning rate scheduler~\citep{hu2024minicpm}, and conduct training using the Megatron framework~\citep{shoeybi2019megatron}.
All experiments are performed on 32 NVIDIA A800 GPUs with Intel® Xeon® Platinum 8470 CPUs running CentOS Linux 7 (Core), using \texttt{bfloat16} precision. For long-context continual pretraining, we train on 2.0B, 3.4B, and 5.0B tokens for the 1B, 3B, and 8B models, respectively. The subsequent SFT stage uses 0.6B, 1.3B, and 1.9B tokens correspondingly.
Notably, long-context continual pretraining is a standard practice for modern LLM development; therefore, our method introduces neither additional training stages nor extra training overhead compared with full attention and other baselines.

The~\methodname and baseline models are trained under the following settings. For NSA and InfLLMv2, we set the attention sink size to 64, the sliding window size to 1024, the block size to 64, and the total number of selected tokens to 4096. For~\methodname, the block size is also set to 64. We do not enforce attention to the sink tokens; instead, the sliding window consists of the diagonal block and the last block, resulting in a total size of 64--127 tokens. During training, we set $t_c = 1$ for the 1B and 3B models, and $t_c = 0.5$ for the 8B model. The parameter $\alpha$ is gradually increased from 1.25 to 1.5 throughout training. We set $\sigma=10^6$ for both 1B and 3B model, and $\sigma=10^8$ for the 8B model in both training and inference.

\subsubsection{Benchmarks}
\label{sec:long-context-adaption-benchmarks-appendix}

We evaluate~\methodname against all baselines on RULER~\citep{hsieh2024ruler} and HELMET~\citep{yen2025helmet}. We use the official RULER benchmark suite to assess long-context performance, and Olmes~\citep{gu2025olmes} for HELMET evaluation. For each RULER task, we evaluate 50 samples, and for each HELMET task, 100 samples, using greedy generation. All reported metrics and averaging methods follow the official settings. The reported sparsity is computed by excluding the first 4096 tokens, as these tokens use full attention in NSA and InfLLMv2. For general tasks, we evaluate the trained models on MMLU~\citep{wang2024mmlu}, MMLU-Pro~\citep{hendrycks2020measuring}, CSQA~\citep{talmor2019commonsenseqa}, IFEval~\citep{zhou2023instruction}, HellaSwag~\citep{zellers2019hellaswag}, GSM8K~\citep{cobbe2021training}, MATH~\citep{hendrycksmath2021}, DROP~\citep{dua2019drop}, MBPP~\citep{austin2021program}, and HumanEval~\citep{chen2021codex}, by using the LM-Evaluation-Harness~\citep{eval-harness} framework. For full attention, InfLLMv2, and~\methodname, we use softmax full attention during inference. For NSA, since removing the compressed attention during inference would cause training-inference mismatch, we inference the model with compressed attention scores but without sparsity in Stage~2.

\subsection{Efficiency Evaluation}
\label{sec:efficiency-appendix}

\paragraph{Setup.}
All measurements were taken on a single NVIDIA GH200 GPU with CUDA~13.2, using PyTorch~2.10.0 and Triton~3.6.0 at \texttt{bf16} precision. Each kernel is wrapped with \texttt{torch.compile} and timed with Triton's \texttt{do\_bench}; we measure the attention kernels themselves, with query/key/value projections and token-level KV-cache management out of scope. The GQA configuration is fixed across methods: $H_q{=}32$ query heads, $H_\mathrm{kv}{=}2$ key/value heads, head dimension $D{=}128$, and a chunk size of $64$ for all three sparse methods. We sweep $N \in \{16, 32, 64, 96\}$K and chunk sparsity $s \in \{75, 87.5, 93.75\}\%$, with routing top-$k = (1-s) \cdot N / 64$ for the top-$k$ baselines.

\paragraph{Batch sizes.}
We run prefill at batch size~$1$ and decoding at batch size~$24$. Prefill is set to batch size~$1$ to keep the Stage~1 chunk-score tensor within memory at $N{=}96$K. For decoding, batch size~$1$ latency is dominated by kernel-launch overhead, which CUDA graphs can mitigate, but we instead use $24$ to operate outside that regime.

\paragraph{Why we randomize the active-block bitmask.}
Stages~0 and~1 do not depend on the sparsity level $s$, so to sweep $s$ we inject at Stage~2 a random bit-packed mask at the target sparsity rather than a router-produced one. This isolates the cost of Stage~2 at a controlled sparsity and matches the effective sparsity used by the top-$k$ baselines.

\paragraph{Scope.}
These are isolated attention-kernel benchmarks (i.e., no request scheduler, no paged KV cache, etc.) so they do not translate one-for-one to end-to-end TTFT or token throughput in a production serving stack. Integration into a serving framework such as vLLM or SGLang is the natural next step and we leave it to future work.

\subsection{Performance and Efficiency Trade-Offs}
\label{sec:pareto-appendix}

We report the Pareto frontier at the in-distribution context length of 16K
in the main text. \methodname's effective sparsity is input-dependent and
controlled at inference by the scaling factor $t_c$; we sweep
$t_c \in \{0.50, 0.30, 0.15, 0.10\}$, with $t_c = 0.5$ recovering the
training operating point of the 8B checkpoint and the smaller values
probing denser regimes. NSA and InfLLMv2 instead use a deterministic
per-query top-$k$ budget, with average density
$\text{top-}k \times B / L$ ($B = 64$, $L = 16384$). Targeting densities
of $10\%$, $20\%$, $30\%$, and $40\%$ of the prefix to span roughly the
same range as~\methodname, we pick
$\text{top-}k \in \{26, 51, 77, 102\}$. Each point in the figure is
plotted at its empirically measured density.

\section{Extended Related Works}

In this section, we introduce the major related hierarchical trainable sparse attention and show the difference in the design between our method and existing methods.
Since we position our work as a hierarchical, block-wise trainable sparse attention method, we empirically compare it with the state-of-the-art approaches, NSA and InfLLMv2. For other lines of work, such as element-wise sparse attention, we discuss their design principles to highlight their orthogonality to key-novelties of our proposed method.

\paragraph{SeerAttention and MoBA.}
SeerAttention~\citep{gao2024seerattention} and MoBA~\citep{lu2025moba} are representative early works on trainable sparse attention. By compressing both Q and K vectors, these methods introduce dynamic block-sparse patterns through a top-$k$ selection function. Both methods are integrated into model training, allowing the model to adapt to sparse attention patterns and mitigating the train--inference mismatch that arises when attention sparsity is applied. However, because Q is also compressed within each chunk along the sequence dimension, these methods are difficult to apply during decoding. In contrast, \methodname selects KV chunks independently for each query, enabling exact alignment between training and inference as well as between prefill and decoding. This design improves long-context performance while accelerating both prefill and decoding.

\paragraph{NSA and InfLLMv2.}
NSA~\citep{yuan-etal-2025-native} first introduces a separate KV selection mechanism by leveraging the group size in GQA to satisfy the Tensor Core GEMM shape constraints. It adopts three complementary attention patterns: sliding-window attention, block-sparse attention, and compressed attention. The compressed KV representations are produced by an MLP with additional parameters introduced during long-context continual pretraining. However, since compressed attention is integrated into the model architecture, it is difficult to remove even when performing inference on short-context inputs, which can slow down short-context inference.
To address this issue, InfLLMv2~\citep{zhao2026infllmv} removes the compressed attention output from the final attention computation and replaces the MLP with mean pooling, thereby avoiding additional parameters. InfLLMv2 achieves both better performance and higher efficiency, and further provides an official efficient CUDA implementation. Nevertheless, both NSA and InfLLMv2 rely on top-$k$ selection, which lacks dynamic adaptability when selecting the essential KV cache entries to attend to.
\methodname alleviates this limitation by introducing an entmax-based Stage 1 mechanism. Moreover, since \methodname weakens the influence of prior scores through a large hyperparameter $\sigma$, it can also perform inference without Stage 1 scores and sparsity, aligning with the inference behavior of InfLLMv2. \methodname also removes the overlapped block compression strategy introduced in NSA and InfLLMv2, resulting in fewer chunk summarizations and accelerating Stage 1 without causing substantial performance degradation. In this paper, we choose these methods as baselines to demonstrate the effectiveness of the dynamicity introduced by entmax-based Stage 1.

\paragraph{Element-wise Sparse Attention.}
Adopted in DeepSeek-V3.2~\citep{deepseekai2025deepseekv32pushingfrontieropen} and DeepSeek-V4~\citep{deepseek2026v4}, element-wise hierarchical sparse attention methods, such as DSA~\citep{deepseekai2025deepseekv32pushingfrontieropen}, CSA, and HCA~\citep{deepseek2026v4}, have emerged as alternatives to block-selection methods by enabling finer-grained token selection. These methods typically train a token indexer to predict the importance of each KV position with respect to a given query input, supervised by an additional loss. Although they can achieve better performance than block-level sparse attention, their efficiency relies on large model scales and the use of the MLA architecture, which reduces the relative overhead introduced by the indexer. As a result, they are not directly compatible with the commonly adopted GQA architecture. In addition, these methods adopt a top-$k$ selection function.
Notably, \methodname is not fundamentally incompatible with element-wise sparse attention: a coarse-level token indexer could also be combined with entmax-based selection techniques and our prior-based training method. However, because training extremely large LLMs with element-wise sparse attention is currently infeasible for us, we leave the integration of \methodname with these methods to future work.

\paragraph{HSA.} HSA~\citep{hu2025hardware, hu2025every} has demonstrated the long-context performance and extrapolation capabilities. By using a local encoder and incorporating a \texttt{[CLS]} landmark token into the input sequence, HSA achieves strong long-context performance. However, the local encoder relies on a large number of newly introduced parameters and requires substantial training. Moreover, HSA cannot be directly adapted from existing pretrained softmax full-attention models, which limits its generalizability.
In contrast, \methodname's local attention mechanism degenerates to mean pooling at the beginning of training, providing a smooth transition from an existing full-attention model to a \methodname sparse-attention model. Since our comparison is conducted under a long-context continual pretraining setup, we exclude HSA as a baseline because it cannot be adapted from pretrained models.

\section{Equivalent Attention Bias Form of Diagonal Estimation}
\label{sec:equiv-bias-form}

We start by re-stating the original proposition.

\begin{restatetheorem}{Proposition~\ref{th:attn-bias}}
Calculating the attention output $\bm o_i$ with Eq.~\eqref{eq:stage2} and Eq.~\eqref{eq:lambda_factorization} is equivalent to first computing $\mu_i = \operatorname{mean}_{j\in \mathcal{R}_i}\{\log w_{i, c(j)}\}$, followed by
$$
d_{i,j} =
\begin{cases}
\dfrac{\log w_{i,c(j)} - \mu_i}{\sigma} \,, & j \in \mathcal{R}_i, \\[0.4em]
0\,, & j \in \mathcal{D}_i, \\
\end{cases}\quad
\bm o_i = \sum_{j\in \mathcal{R}_i\cup \mathcal{D}_i}
   \frac{\exp(z_{i,j} + d_{i,j})\,\bm v_j}
        {\sum_{t\in \mathcal{R}_i\cup \mathcal{D}_i} \exp(z_{i,t} + d_{i,t})}.
$$
\end{restatetheorem}

\begin{proof}
We start the proof by expanding $\lambda_i$:
\begin{equation*}
\lambda_i = \operatorname{sigmoid}\left(\operatorname{KL}(\bm u_{\mathcal{R}_i}\|\bm w'_{\mathcal{R}_i}) + \log\frac{|\mathcal{R}_i|}{|\mathcal{D}_i|}\right) = \frac{1}{1 + \exp\left(-\operatorname{KL}(\bm u_{\mathcal{R}_i} \|\bm w_{\mathcal{R}_i}') - \log\frac{|\mathcal{R}_i|}{|\mathcal{D}_i|}\right)}.
\end{equation*}
Since $\bm u_{\mathcal{R}_i}$ is uniform on $\mathcal{R}_i$, the KL term simplifies as
\begin{equation*}
\operatorname{KL}(\bm u_{\mathcal{R}_i} \| \bm w'_{\mathcal{R}_i})
  = \sum_{j\in \mathcal{R}_i} \frac{1}{|\mathcal{R}_i|}\log\frac{1/|\mathcal{R}_i|}{w'_{i,j}}
  = -\log|\mathcal{R}_i| - \frac{1}{|\mathcal{R}_i|}\sum_{j\in \mathcal{R}_i} \log w'_{i,j}.
\end{equation*}
Denote the geometric mean of $\bm w'_i$ on the routed support by
\begin{equation*}
\eta_i = \exp\!\left(\tfrac{1}{|\mathcal{R}_i|}\textstyle\sum_{j\in \mathcal{R}_i}\log w'_{i,j}\right) = \left(\textstyle\prod_{j\in\mathcal{R}_i} w'_{i,j}\right)^{1/|\mathcal{R}_i|}.
\end{equation*}
Plugging back, we have
\begin{equation*}
\lambda_i = \frac{1}{1 + |\mathcal{D}_i|\,\eta_i},\qquad 1 - \lambda_i = \frac{|\mathcal{D}_i|\,\eta_i}{1 + |\mathcal{D}_i|\,\eta_i}.
\end{equation*}
Therefore, for all $j\in \mathcal{R}_i$, we have
\begin{equation*}
\log g_{i,j} = \log \lambda_i + \log w'_{i,j} = \log w'_{i,j} - \log\!\left(1 + |\mathcal{D}_i|\,\eta_i\right),
\end{equation*}
and for all $j\in \mathcal{D}_i$, we have
\begin{equation*}
\log g_{i,j} = \log\!\left(\tfrac{1-\lambda_i}{|\mathcal{D}_i|}\right) = \log\eta_i - \log\!\left(1 + |\mathcal{D}_i|\,\eta_i\right).
\end{equation*}
Denote
\begin{equation*}
A = \log\eta_i - \log\!\left(1 + |\mathcal{D}_i|\,\eta_i\right),
\end{equation*}
then $\forall j\in \mathcal{D}_i,~ \log g_{i,j} - A = 0 = d_{i,j}$, and
\begin{equation*}
\forall j\in \mathcal{R}_i,~ \log g_{i,j} - A = \log w'_{i,j} - \log\eta_i.
\end{equation*}
Recalling $w'_{i,j} = w_{i,c(j)}^{1/\sigma}/(B\,\bm 1^\top \bm w_i^{1/\sigma})$, the normalizer $B\,\bm 1^\top \bm w_i^{1/\sigma}$ does not depend on $j$ and so cancels between $\log w'_{i,j}$ and $\log\eta_i$, giving
\begin{equation*}
\log g_{i,j} - A = \log w_{i,c(j)}^{1/\sigma} - \operatorname{mean}\{\log \bm w_i^{1/\sigma}\} = \frac{\log w_{i,c(j)} - \mu_i}{\sigma} = d_{i,j}.
\end{equation*}
Therefore, by plugging the results above into Eq.~\eqref{eq:stage2}, we can obtain that
\begin{equation*}
\begin{aligned}
\bm o_i &= \sum_{j\in \mathcal{R}_i\cup \mathcal{D}_i} \frac{g_{i,j}\exp(z_{i,j})\cdot \bm v_j}{\sum_{t\in \mathcal{R}_i\cup \mathcal{D}_i}g_{i,t}\exp(z_{i,t})} = \sum_{j\in \mathcal{R}_i\cup \mathcal{D}_i} \frac{\exp(z_{i,j} + \log g_{i,j} - A)\cdot \bm v_j}{\sum_{t\in \mathcal{R}_i\cup \mathcal{D}_i}\exp(z_{i,t} + \log g_{i,t} - A)} \\
&= \sum_{j\in \mathcal{R}_i\cup \mathcal{D}_i} \frac{\exp(z_{i,j} + d_{i,j})\cdot \bm v_j}{\sum_{t\in \mathcal{R}_i\cup \mathcal{D}_i}\exp(z_{i,t} + d_{i,t})}.
\end{aligned}
\end{equation*}
\end{proof}

\section{Non-Dispersion Property}
\label{sec:non-dispersion}

In this section, we focus on the dispersion property with respect to the Stage 1 design and the head aggregation strategy in hierarchical sparse attention. We begin by restating the definition of dispersion following~\citep{vasylenko2026longcontext}. Dispersion describes the phenomenon in which the entropy of attention probabilities grows at $\sim\log n$, introducing greater uncertainty as the sequence length increases and thereby making long-context modeling more challenging.

\begin{definition}[Dispersion]
    Given a bounded sequence $\bm z_n\in \mathbb{R}^n$ and a mapping $f: \mathbb{R}^n\to \triangle_{n}$, we call the mapping $f$ a dispersive function if 
    $$
        \lim_{n\to\infty} \frac{\mathcal{H}(f(\bm z_n))}{\log n} = 1,
    $$
    where $\mathcal{H}$ is the Shannon's entropy. If there exists a constant $0 \leq c < 1$, 
    $$
    \limsup_{n\to\infty} \frac{\mathcal{H}(f(\bm z_n))}{\log n}\leq c < 1,
    $$
    then $f$ is a non-dispersive function.
\end{definition}

Existing top-$k$ based hierarchical sparse attention methods, such as NSA~\citep{yuan-etal-2025-native} and InfLLMv2~\citep{zhao2026infllmv}, are non-dispersive in Stage 2. In particular, applying softmax only over the top-$k$ selected attention logits constitutes a non-dispersive operation.

\begin{theorem}[Top-$k$-Softmax is non-dispersive]
    $\operatorname{softmax}(\operatorname{top-}k(\cdot))$ is a non-dispersive function, i.e.,
    given a bounded sequence $\bm z\in \mathbb{R}^n$, there is always 
    $$\lim_{n\to\infty} \frac{\mathcal{H}(\operatorname{softmax}(\operatorname{top-}k(\bm z)))}{\log n} = 0.$$
\end{theorem}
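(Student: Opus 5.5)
The plan is to pin down what $\operatorname{softmax}(\operatorname{top-}k(\cdot))$ means as a map $\mathbb{R}^n\to\simplex_n$, and then use the fact that its output is supported on at most $k$ coordinates, with $k$ fixed and independent of $n$. Concretely, I will define $\operatorname{top-}k(\bm z)$ to keep the $k$ largest entries of $\bm z$ and set the others to $-\infty$, breaking ties by any fixed rule. Then $\bm p=\operatorname{softmax}(\operatorname{top-}k(\bm z))$ satisfies $p_j=0$ for every $j$ outside a set $\mathcal{K}_n$ with $|\mathcal{K}_n|=k$ (or $\min(k,n)$ when $n<k$). With the convention $0\log 0=0$, the Shannon entropy only involves the coordinates in $\mathcal{K}_n$.

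The key step is the elementary bound that a distribution supported on at most $k$ atoms has entropy at most $\log k$:
$$\mathcal{H}(\bm p)=-\sum_{j\in\mathcal{K}_n}p_j\log p_j\le \log|\mathcal{K}_n|\le \log k .$$
This follows from Jensen's inequality (concavity of $\log$) applied on the support, or equivalently from the nonnegativity of $\operatorname{KL}(\bm p\,\|\,\bm u_{\mathcal{K}_n})$, where $\bm u_{\mathcal{K}_n}$ is the uniform distribution on $\mathcal{K}_n$. Entropy is also nonnegative, so for all $n\ge 2$ we get
$$0\le \frac{\mathcal{H}(\bm p)}{\log n}\le \frac{\log k}{\log n}.$$
Letting $n\to\infty$ with $k$ fixed sends the right-hand side to $0$, so the limit exists and equals $0$ by squeezing. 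This gives the stronger claim in the statement, and the definition of non-dispersive follows with $c=0$.

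Boundedness of $\bm z$ is not actually needed here, because the bound is uniform over all inputs. I will remark on this, since it contrasts with the softmax case, where boundedness is exactly what forces the entropy to behave like $\log n$.

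I don't expect a real obstacle; the only subtle point is interpretive. The argument relies on $k$ being a constant that does not grow with $n$. If instead $k$ scales as $k=\Theta(n^{\beta})$, the same bound yields only $\limsup \mathcal{H}/\log n\le\beta<1$: still non-dispersive, but the limit is no longer $0$. I will state the fixed-$k$ assumption explicitly and note this variant in a short remark.
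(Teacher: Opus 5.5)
Your proof is correct and follows the paper's own argument: you bound $\mathcal{H}(\operatorname{softmax}(\operatorname{top-}k(\bm z)))\le \log k$ because the output is supported on at most $k$ coordinates, then let $n\to\infty$ with $k$ fixed. Your explicit use of $\mathcal{H}\ge 0$ to squeeze the ratio is a small improvement, since the paper only gives the upper bound and leaves the existence of the limit and its lower bound implicit.
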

\begin{proof}
    $\mathcal{H}(\operatorname{softmax}(\operatorname{top-}k(\bm z))) \leq \log k$, thus 
    $$\lim_{n\to\infty} \frac{\mathcal{H}(\operatorname{softmax}(\operatorname{top-}k(\bm z)))}{\log n} \leq \lim_{n\to\infty} \frac{\log k}{\log n} =  0.$$
\end{proof}

This shows that top-$k$ sparse attention mechanisms are non-dispersive and therefore can potentially perform well in long-context modeling.

However, when applying sparse attention to GQA models, existing methods commonly require all query heads within the same group to share an identical sparse pattern. To achieve this, a softmax-based head aggregation is typically applied before top-$k$ sparse attention, where the Stage 1 scores are merged on the probability simplex to prevent the aggregated scores from being overly dominated by a few strong heads. For \methodname, entmax head-aggregation is applied to achieve similar effect.

Next, we first formally define the head aggregation mechanism, followed by proving the dispersion property of softmax head aggregagation and non-dispersion property of entmax head disaggregation.

\begin{restatetheorem}[Head aggregation]
{Definition~\ref{de:head-aggregation}}
Given a mapping $f:\mathbb{R}^n\to \simplex_n$, $H$ bounded sequence $\{\bm z^{(h)}\}_{h=1}^H\subset \mathbb{R}^n$, and the aggregation weights $\theta = (\theta_1, \cdots, \theta_H)^\top\in \simplex_{H}$, the aggregated probability is 
$$
\operatorname{aggr}_f\left(\bm z^{(1)}, \bm z^{(2)}, \cdots, \bm z^{(H)};\bm \theta\right) = \sum_{h=1}^H \theta_h\cdot f\left(\bm z^{(h)}\right)\in \simplex_n.
$$
\end{restatetheorem}

\begin{restatetheorem}[Dispersion in head aggregation]
{Theorem~\ref{th:dispersion-head-aggregation}}
Under any finite $H$ and $\theta\in \simplex_{H}$, the following always hold:

1. Softmax head aggregation is dispersive, i.e. 
$$
\lim_{n\to\infty}\frac{\mathcal{H}\left(\operatorname{aggr}_{\operatorname{softmax}}\left(\bm z^{(1)}, \bm z^{(2)}, \cdots, \bm z^{(H)};\bm \theta\right)\right)}{\log n} = 1,
$$

2. Denote $\bm p^{(h)} = \alpha\operatorname{-entmax}(\bm z^{(h)})$, if there are $\|\bm p^{(h)}\|_0 = \mathcal{O}\left(n^{\beta_h}\right), \beta_h\in (0, 1)$, then entmax head aggregation is not dispersive, and 
$$
\limsup_{n\to\infty}\frac{\mathcal{H}\left(\operatorname{aggr}_{\alpha\operatorname{-entmax}}\left(\bm z^{(1)}, \bm z^{(2)}, \cdots, \bm z^{(H)};\bm \theta\right)\right)}{\log n} \leq \max_{h\in [H]} \beta_h < 1 .
$$
\end{restatetheorem}

\begin{proof}
We first prove that softmax head aggregation is dispersive.

Denote $\bm p^{(h)} = \operatorname{softmax}\left(\bm z^{(h)}\right)$. 

First, we find the lower bound of the numerator by using $\mathcal{H}(\cdot)$'s concavity.
\begin{equation*}
\begin{aligned}
\mathcal{H}\left(\operatorname{aggr}_{\operatorname{softmax}}\left(\bm z^{(1)}, \bm z^{(2)}, \cdots, \bm z^{(H)};\bm \theta\right)\right) &= \mathcal{H}(\theta_1\bm p^{(1)} + \cdots + \theta_H\bm p^{(H)}) \\&\geq \theta_1\mathcal{H}\left(\bm p^{(1)}\right) + \cdots + \theta_H\mathcal{H}\left(\bm p^{(H)}\right).
\end{aligned}
\end{equation*}
Then, since $\operatorname{softmax}$ is a dispersive function, we have 
\begin{equation*}
\begin{aligned}
\lim_{n\to\infty}\frac{\mathcal{H}\left(\operatorname{aggr}_{\operatorname{softmax}}\left(\bm z^{(1)}, \cdots, \bm z^{(H)};\bm \theta\right)\right)}{\log n} &\geq \theta_1\lim_{n\to\infty}\frac{\mathcal{H}\left(\bm p^{(1)}\right)}{\log_n} + \cdots + \theta_H\lim_{n\to\infty}\frac{\mathcal{H}\left(\bm p^{(H)}\right)}{\log_n} \\&= \theta_1 + \cdots + \theta_H = 1.
\end{aligned}
\end{equation*}

Then, since $\forall \bm p\in \triangle_{n}$, $\mathcal{H}(\bm p)\leq \log n$, we also have $\lim_{n\to\infty}\frac{\mathcal{H}(\bm p)}{\log n} \leq 1$. Therefore, we can obtain the following limitation:
$$
    \lim_{n\to\infty}  \frac{\mathcal{H}\left(\operatorname{softaggr}\left(\bm z^{(1)}, \bm z^{(2)}, \cdots, \bm z^{(H)};\bm \theta\right)\right)}{\log n} = 1.
$$

Then, we prove that the entmax head aggregation under given assumption is not dispersive.

Since $\forall h\in \{1, 2, \cdots, H\}, \|\bm p^{(h)}\|_0 = \mathcal{O}(n^{\beta_h})$, then 
$$\|\bm p\|_0 = \left\|\frac{1}{H}\sum_{h=1}^H \bm p^{(h)}\right\|_0
= \left\|\sum_{h=1}^H \bm p^{(h)}\right\|_0
\leq \sum_{h=1}^H \left\|\bm p^{(h)}\right\|_0
= \sum_{h=1}^H\mathcal{O}\left( n^{\beta_h}\right) = \mathcal{O}\left(n^{\max_{h\in[H]}\beta_h}\right),$$
therefore we can obtain that $\|\bm p\|_0 = \mathcal{O}\left(n^{\max_{h\in[H]}\beta_h}\right)$.

By using~\cite{vasylenko2026longcontext}'s Proposition 1, the theorem can be simply proved.

\end{proof}

\section{Additional Experimental Results}

\subsection{Full Benchmark Results of HELMET}
\label{sec:supp-helmet}
\begin{table*}[h]
    \small
        
    \centering
    \setlength{\tabcolsep}{2pt} 
    \caption{\textbf{\textit{HELMET}} scores (\%) of \methodname~compared with all baselines for sparse adaptation (detailed).}
    \label{tab:results-helmet-full}
    \resizebox{\textwidth}{!}{
    \begin{tabular}{ll>{\columncolor{black!5}}cccc>{\columncolor{black!5}}cccc>{\columncolor{black!5}}cccc}
    \toprule
    
    \multicolumn{2}{c}{\multirow{2}{*}{Task}} & \multicolumn{4}{c}{\texttt{MiniCPM-4-1B}} & \multicolumn{4}{c}{\texttt{MiniCPM-4-3B}} & \multicolumn{4}{c}{\texttt{MiniCPM-4-8B}} \\
    \cmidrule(lr){3-6} \cmidrule(lr){7-10} \cmidrule(lr){11-14} 
    & & FullAttn & NSA & InfLLMv2 & \methodname & FullAttn & NSA & InfLLMv2 & \methodname & FullAttn & NSA & InfLLMv2 & \methodname \\
    \midrule
    \multirow{5}{*}{Recall} 
    & \textit{JKV} & 48.0 & 20.0 & 21.0 & \textbf{23.0} & 55.0 & 24.0 & 45.0 & \textbf{53.0} & 96.0 & 25.0 & \textbf{91.0} & 73.0 \\
    & \textit{MK2} & 82.0 & 16.0 & 48.0 & \textbf{61.0} & 93.0 & 20.0 & 51.0 & \textbf{77.0} & 99.0 & 28.0 & 73.0 & \textbf{98.0} \\
    & \textit{MK3} & 43.0 & 5.0 & 19.0 & \textbf{33.0} & 70.0 & 9.0 & 38.0 & \textbf{45.0} & 99.0 & 14.0 & 62.0 & \textbf{82.0} \\
    & \textit{MV} & 93.5 & 52.5 & 86.8 & \textbf{90.3} & 96.3 & 66.5 & \textbf{98.0} & 92.5 & 99.0 & 73.0 & 99.5 & \textbf{100.0} \\
    & Avg. & 66.6 & 23.4 & 43.7 & \textbf{51.8} & 78.6 & 29.9 & 58.0 & \textbf{66.9} & 98.3 & 35.0 & 81.4 & \textbf{88.3} \\
    \midrule
    \multirow{6}{*}{ICL} 
    & \textit{Ban} & 47.0 & 36.0 & \textbf{54.0} & 51.0 & 41.0 & \textbf{57.0} & 51.0 & 50.0 & 72.0 & 73.0 & \textbf{75.0} & 71.0 \\
    & \textit{Cli} & 64.0 & 36.0 & 60.0 & \textbf{64.0} & 45.0 & 46.0 & 46.0 & \textbf{63.0} & 64.0 & 79.0 & \textbf{83.0} & 73.0 \\
    & \textit{NLU} & 63.0 & 46.0 & \textbf{62.0} & 60.0 & 61.0 & 55.0 & 48.0 & \textbf{64.0} & 77.0 & 75.0 & 76.0 & \textbf{78.0} \\
    & \textit{TrC} & 80.0 & 78.0 & 84.0 & \textbf{88.0} & 54.0 & 44.0 & \textbf{54.0} & 53.0 & 53.0 & 52.0 & 60.0 & \textbf{77.0} \\
    & \textit{TrF} & 41.0 & 34.0 & 36.0 & \textbf{42.0} & 23.0 & 24.0 & 22.0 & \textbf{26.0} & 14.0 & 20.0 & \textbf{24.0} & 15.0 \\
    & Avg. & 59.0 & 46.0 & 59.2 & \textbf{61.0} & 44.8 & 45.2 & 44.2 & \textbf{51.2} & 56.0 & 59.8 & \textbf{63.6} & 62.8 \\
    \midrule
    Rerank & Avg. & 13.9 & 15.0 & 15.5 & \textbf{16.5} & 25.3 & 20.5 & \textbf{24.8} & 20.0 & 37.8 & 27.9 & 36.8 & \textbf{39.7} \\
    \midrule
    \multirow{5}{*}{RAG} 
    & \textit{Hot} & 20.0 & \textbf{19.0} & 17.0 & 14.0 & 28.0 & 30.0 & \textbf{35.0} & \textbf{35.0} & 33.0 & 28.0 & \textbf{36.0} & 33.0 \\
    & \textit{NQ} & 25.0 & 32.0 & 31.0 & \textbf{35.0} & 40.0 & \textbf{42.0} & 41.0 & 41.0 & 45.0 & 43.0 & 44.0 & \textbf{46.0} \\
    & \textit{Pop} & 32.0 & 32.0 & \textbf{38.0} & 35.0 & 46.0 & 37.0 & 43.0 & \textbf{46.0} & 45.0 & 38.0 & \textbf{48.0} & \textbf{48.0} \\
    & \textit{Tri} & 61.0 & 57.0 & 62.0 & \textbf{70.0} & 80.0 & 73.0 & \textbf{78.0} & 69.0 & 90.0 & 84.0 & 88.0 & \textbf{89.0} \\
    & Avg. & 34.5 & 35.0 & 37.0 & \textbf{38.5} & 48.5 & 45.5 & \textbf{49.3} & 47.8 & 53.3 & 48.3 & \textbf{54.0} & \textbf{54.0} \\
    \midrule
    \multirow{4}{*}{LongQA} 
    & \textit{InC} & 45.0 & 38.0 & \textbf{40.0} & 34.0 & 39.0 & 42.0 & \textbf{46.0} & 38.0 & 49.0 & 46.0 & \textbf{50.0} & \textbf{50.0} \\
    & \textit{InQ} & 14.3 & 13.9 & \textbf{15.2} & 13.3 & 20.5 & 18.2 & \textbf{22.1} & 14.6 & 27.2 & 25.1 & \textbf{26.5} & 26.0 \\
    & \textit{Nar} & 61.0 & 52.0 & 61.0 & \textbf{62.0} & 68.0 & 59.0 & \textbf{75.0} & 55.0 & 81.0 & 82.0 & \textbf{83.0} & 81.0 \\
    & Avg. & 40.1 & 34.6 & \textbf{38.7} & 36.4 & 42.5 & 39.7 & \textbf{47.7} & 35.9 & 52.4 & 51.0 & \textbf{53.2} & 52.3 \\
    \midrule
    \multirow{3}{*}{Cite} 
    & \textit{ASQ} & 17.4 & 12.8 & \textbf{14.7} & 14.7 & 23.2 & 14.1 & 10.8 & \textbf{19.6} & 36.0 & 24.0 & \textbf{32.3} & 32.1 \\
    & \textit{Qam} & 0.4 & 0.1 & \textbf{0.4} & 0.1 & 1.4 & 0.5 & \textbf{1.1} & 1.1 & 4.0 & 3.3 & \textbf{4.3} & 2.8 \\
    & Avg. & 8.9 & 6.4 & \textbf{7.6} & 7.4 & 12.3 & 7.3 & 6.0 & \textbf{10.3} & 20.0 & 13.7 & \textbf{18.3} & 17.4 \\
    \midrule
    \multirow{3}{*}{Summ.} 
    & \textit{InS} & 1.0 & 0.8 & \textbf{0.9} & 0.3 & 2.4 & 1.0 & \textbf{1.1} & 1.0 & 5.3 & 3.7 & \textbf{4.1} & 3.8 \\
    & \textit{Mul} & 7.4 & 8.1 & 8.9 & \textbf{13.2} & 16.9 & 13.3 & \textbf{18.5} & 15.6 & 27.6 & \textbf{25.6} & 23.3 & 23.5 \\
    & Avg. & 4.2 & 4.4 & 4.9 & \textbf{6.7} & 9.7 & 7.1 & \textbf{9.8} & 8.3 & 16.5 & \textbf{14.7} & 13.7 & 13.7 \\
    \midrule
    \multicolumn{2}{c}{Overall $\uparrow$ (\%)} & 32.5 & 23.6 & 29.5 & \textbf{31.2} & 37.4 & 27.9 & 34.2 & \textbf{34.3} & 47.7 & 35.8 & 45.9 & \textbf{46.9} \\
    \multicolumn{2}{c}{Sparsity $\uparrow$ (\%)} & 0.0 & 75.0 & 75.0 & \textbf{75.4} & 0.0 & 75.0 & 75.0 & \textbf{75.4} & 0.0 & 75.0 & 75.0 & \textbf{75.4} \\
    \bottomrule
    \end{tabular}
    }
    \end{table*}

Here, we list the detailed results of \methodname compared with baselines on HELMET in Table~\ref{tab:results-helmet-full}.

\subsection{Ablations on Chunk Sizes}
\label{sec:supp-chunk-sizes}

\begin{table*}[h]
    \small
    
    \centering
    \setlength{\tabcolsep}{2.5pt} 
    \caption{\methodname's performance under different chunk sizes (trained with 64).}
    \label{tab:chunk-size-full}
    \resizebox{\textwidth}{!}{
    \begin{tabular}{ccccccccccccccccc}
    \toprule
    \multirow{2}{*}{Chunk Size} & \multicolumn{13}{c}{RULER-16K} & \multirow{2}{*}{\makecell{Avg. $\uparrow$\\(\%)}} & \multirow{2}{*}{\makecell{Sparsity. $\uparrow$\\(\%)}} \\
    \cmidrule{2-14}
        & {SG1} & {SG2} & {SG3} & {MK1} & {MK2} & {MK3} & {MV} & {MQ} & {VT} & {CWE} & {FWE} & {QA1} & {QA2} &  &  \\
    \midrule
    16 & 100.0 & 100.0 & 100.0 & 100.0 & 98.0 & 100.0 & 100.0 & 99.0 & 57.2 & 50.6 & 76.0 & 70.0 & 50.0 & 84.7 & 77.0 \\
    32 & 100.0 & 100.0 & 100.0 & 98.0 & 96.0 & 86.0 & 100.0 & 99.0 & 60.0 & 49.6 & 72.0 & 72.0 & 54.0 & 83.6 & 75.7 \\
    64 & 100.0 & 100.0 & 96.0 & 96.0 & 92.0 & 68.0 & 96.0 & 95.0 & 54.0 & 45.0 & 66.7 & 70.0 & 54.0 & 79.4 & 75.1 \\
    128 & 100.0 & 92.0 & 98.0 & 90.0 & 80.0 & 58.0 & 83.5 & 88.5 & 50.0 & 48.8 & 59.3 & 74.0 & 46.0 & 74.5 & 75.0 \\
    
    \bottomrule
    \end{tabular}
    }
    \end{table*}

\methodname trained with one chunk size can be used for inference with different chunk sizes. A smaller chunk size enables finer-grained Stage~1 selection, leading to better performance under the same sparsity constraint. To verify this, we evaluate our 8B model, trained with a chunk size of 64, on RULER using test-time chunk sizes of $\{16, 32, 64, 128\}$. We adjust the entmax temperature to achieve  $\sim75\%$ sparsity. As shown in Table~\ref{tab:chunk-size-full}, smaller inference chunk sizes yield better task performance, demonstrating that the inference chunk size is not constrained by the chunk size used during training.

\subsection{Full Results of \methodname Inference in Full Attention}
\label{sec:supp-sp-tr-ds-inf}
\begin{table*}[h]
  \centering
  \small
  \setlength{\tabcolsep}{4.5pt}
  \caption{
    RULER performance of \methodname models inference with softmax full attention (detailed).
  }
  \label{tab:ruler-fullattn-adansa-full}
  \resizebox{\textwidth}{!}{
  \begin{tabular}{llcccccccccccccc}
  \toprule
  & \multirow{2}{*}{Method} & \multicolumn{14}{c}{RULER} \\
  \cmidrule(lr){3-16}
  & & SG1 & SG2 & SG3 & MK1 & MK2 & MK3 & MV & MQ & VT & CWE & FWE & QA1 & QA2 & Avg. \\
  \midrule
  \multirow{2}{*}{\texttt{1B}}
  & FullAttn & 100.0 & 100.0 & 100.0 & \textbf{98.0} & 78.0 & 48.0 & 87.0 & \textbf{89.0} & 17.6 & \textbf{1.8} & 66.7 & 44.0 & 36.0 & 66.6 \\
  & \methodname-full & \textbf{100.0} & \textbf{100.0} & \textbf{100.0} & 92.0 & \textbf{92.0} & \textbf{46.0} & \textbf{88.0} & 88.5 & \textbf{62.0} & 0.8 & \textbf{71.3} & 38.0 & \textbf{36.0} & \textbf{70.4} \\
  \midrule
  \multirow{2}{*}{\texttt{3B}}
  & FullAttn & \textbf{100.0} & \textbf{100.0} & 98.0 & \textbf{100.0} & 94.0 & \textbf{60.0} & \textbf{96.0} & \textbf{96.5} & 20.4 & \textbf{5.8} & 19.3 & \textbf{60.0} & \textbf{50.0} & 69.2 \\
  & \methodname-full & \textbf{100.0} & \textbf{100.0} & \textbf{100.0} & \textbf{100.0} & \textbf{96.0} & 56.0 & 89.0 & 94.5 & \textbf{21.6} & 2.8 & \textbf{82.7} & 50.0 & 42.0 & \textbf{71.9} \\
  \midrule
  \multirow{2}{*}{\texttt{8B}}
  & FullAttn & \textbf{100.0} & \textbf{100.0} & \textbf{100.0} & 98.0 & \textbf{100.0} & 96.0 & \textbf{100.0} & \textbf{99.5} & 58.4 & 49.2 & \textbf{78.0} & \textbf{78.0} & \textbf{52.0} & 85.3 \\
  & \methodname-full & \textbf{100.0} & \textbf{100.0} & \textbf{100.0} & \textbf{100.0} & \textbf{100.0} & \textbf{100.0} & 99.5 & 99.0 & \textbf{77.2} & \textbf{56.2} & 74.7 & 70.0 & 50.0 & \textbf{86.7} \\
  \bottomrule
  \end{tabular}
  }
\end{table*}

\begin{table*}[t]
  \centering
  \small
  \setlength{\tabcolsep}{9pt}
  \caption{
    HELMET performance of \methodname models inference with softmax full attention (detailed). 
  }
  \label{tab:helmet-fullattn-adansa-full}
  \resizebox{\textwidth}{!}{
  \begin{tabular}{llcccccccc}
  \toprule
  & \multirow{2}{*}{Method} & \multicolumn{8}{c}{HELMET} \\
  \cmidrule(lr){3-10}
  & & Recall & ICL & Rerank & RAG & LongQA & Cite & Summ. & Overall \\
  \midrule
  \multirow{2}{*}{\texttt{1B}}
  & FullAttn & \textbf{66.6} & 59.0 & 13.9 & 34.5 & \textbf{40.1} & \textbf{8.9} & 4.2 & 32.5 \\
  & \methodname-full & 66.2 & \textbf{61.2} & \textbf{17.5} & \textbf{39.8} & 37.5 & 8.6 & \textbf{5.7} & \textbf{33.8} \\
  \midrule
  \multirow{2}{*}{\texttt{3B}}
  & FullAttn & 78.6 & 44.8 & \textbf{25.3} & \textbf{48.5} & \textbf{42.5} & \textbf{12.3} & 9.7 & \textbf{37.4} \\
  & \methodname-full & \textbf{80.3} & \textbf{48.2} & 23.1 & 46.8 & 40.2 & 10.5 & \textbf{9.7} & 37.0 \\
  \midrule
  \multirow{2}{*}{\texttt{8B}}
  & FullAttn & 98.3 & 56.0 & 37.8 & \textbf{53.3} & 52.4 & 20.0 & \textbf{16.5} & 47.7 \\
  & \methodname-full & \textbf{99.8} & \textbf{59.0} & \textbf{40.5} & \textbf{53.3} & \textbf{53.7} & \textbf{20.9} & 13.6 & \textbf{48.7} \\
  \bottomrule
  \end{tabular}
  }
\end{table*}

Similar to InfLLMv2~\citep{zhao2026infllmv}, models trained with \methodname can perform inference with vanilla softmax full attention without notable performance degradation. As shown in Table~\ref{tab:ruler-fullattn-adansa-full} and Table~\ref{tab:helmet-fullattn-adansa-full}, \methodname-full outperforms FullAttn in most settings, demonstrating that models trained with sparse attention can even achieve better long-context performance than those trained with full attention when evaluated using full-attention inference.

\subsection{Cost-Effectiveness Analysis}

\pgfplotsset{compat=1.18}

\definecolor{adansaColor}{HTML}{E67373}
\definecolor{infllmColor}{HTML}{64B5F6}
\definecolor{nsaColor}{HTML}{81C784}
\definecolor{fullAttnColor}{HTML}{555555}

\begin{figure*}[h]
\centering
\vspace{-0.5em}
\begin{tikzpicture}
\begin{groupplot}[
    group style={
        group size=4 by 2,
        horizontal sep=0.04\linewidth,
        vertical sep=0.04\linewidth,
    },
    width=0.31\linewidth,
    height=0.25\linewidth,
    xmin=48,
    xmax=92,
    enlarge x limits=false,
    xtick={50,60,70,80,90},
    label style={font=\scriptsize},
    tick label style={font=\scriptsize},
    yticklabel style={text width=1em, align=right},
    title style={font=\scriptsize\bfseries, yshift=-0.4em},
    grid=major,
    grid style={gray!18},
    axis line style={black!70},
    tick style={black!70},
    major tick length=2pt,
    minor tick length=1pt,
    every axis plot/.append style={
        line width=1.4pt,
        mark=*,
        mark size=2.2pt,
        mark options={solid, fill=white}
    },
]

\nextgroupplot[
    title={Recall},
    ymin=0,
    ymax=110,
    ytick={0,20,40,60,80,100},
    xticklabel=\empty,
    ylabel={Accuracy $\to$ (\%)},
    ylabel style={yshift=-0.4em},
]
\addplot[color=infllmColor, mark options={solid, fill=infllmColor}, forget plot] coordinates {
    (60.156250,95.812500)
    (69.921875,90.000000)
    (80.078125,74.625000)
    (89.843750,37.875000)
};
\addplot[color=nsaColor, mark options={solid, fill=nsaColor}, forget plot] coordinates {
    (60.156250,65.125000)
    (69.921875,45.937500)
    (80.078125,27.250000)
    (89.843750,10.062500)
};
\addplot[color=fullAttnColor, dashed, mark=none, line width=1.2pt, forget plot] coordinates {
    (50,98.562500)
    (90.5,98.562500)
};
\addplot[color=adansaColor, mark options={solid, fill=adansaColor}, forget plot] coordinates {
    (54.628800,97.750000)
    (63.799400,95.000000)
    (79.329400,81.312500)
    (87.435900,62.625000)
};

\nextgroupplot[
    title={RAG},
    ymin=30,
    ymax=60,
    ytick={30,40,50,60},
    xticklabel=\empty,
]
\addplot[color=infllmColor, mark options={solid, fill=infllmColor}, forget plot] coordinates {
    (60.156250,54.500000)
    (69.921875,54.250000)
    (80.078125,54.000000)
    (89.843750,48.000000)
};
\addplot[color=nsaColor, mark options={solid, fill=nsaColor}, forget plot] coordinates {
    (60.156250,50.250000)
    (69.921875,49.500000)
    (80.078125,45.750000)
    (89.843750,36.250000)
};
\addplot[color=fullAttnColor, dashed, mark=none, line width=1.2pt, forget plot] coordinates {
    (48,53.750000)
    (92,53.750000)
};
\addplot[color=adansaColor, mark options={solid, fill=adansaColor}, forget plot] coordinates {
    (60.142500,52.500000)
    (69.417000,52.500000)
    (83.295900,52.750000)
    (89.822000,52.250000)
};

\nextgroupplot[
    title={Rerank},
    ymin=10,
    ymax=45,
    ytick={10,20,30,40},
    xticklabel=\empty,
]
\addplot[color=infllmColor, mark options={solid, fill=infllmColor}, forget plot] coordinates {
    (60.156250,40.291030)
    (69.921875,38.365960)
    (80.078125,35.142950)
    (89.843750,21.844610)
};
\addplot[color=nsaColor, mark options={solid, fill=nsaColor}, forget plot] coordinates {
    (60.156250,36.659670)
    (69.921875,29.807870)
    (80.078125,24.668740)
    (89.843750,17.067990)
};
\addplot[color=fullAttnColor, dashed, mark=none, line width=1.2pt, forget plot] coordinates {
    (50,38.867110)
    (90.5,38.867110)
};
\addplot[color=adansaColor, mark options={solid, fill=adansaColor}, forget plot] coordinates {
    (62.131500,40.091200)
    (71.087200,40.391380)
    (83.946900,39.178430)
    (89.982200,33.300170)
};

\nextgroupplot[
    title={ICL},
    ymin=25,
    ymax=70,
    ytick={30,40,50,60,70},
    xticklabel=\empty,
]
\addplot[color=infllmColor, mark options={solid, fill=infllmColor}, forget plot] coordinates {
    (60.156250,61.400000)
    (69.921875,61.400000)
    (80.078125,60.800000)
    (89.843750,50.000000)
};
\addplot[color=nsaColor, mark options={solid, fill=nsaColor}, forget plot] coordinates {
    (60.156250,62.600000)
    (69.921875,61.400000)
    (80.078125,57.200000)
    (89.843750,31.400000)
};
\addplot[color=fullAttnColor, dashed, mark=none, line width=1.2pt, forget plot] coordinates {
    (50,56.200000)
    (90.5,56.200000)
};
\addplot[color=adansaColor, mark options={solid, fill=adansaColor}, forget plot] coordinates {
    (50.755600,60.600000)
    (59.063800,60.600000)
    (74.198500,63.200000)
    (83.243700,66.000000)
};

\nextgroupplot[
    title={Cite},
    ymin=0,
    ymax=25,
    ytick={0,10,20},
    xlabel={Sparsity $\to$ (\%)},
    xlabel style={yshift=0.4em},
    ylabel={Accuracy $\to$ (\%)},
    ylabel style={yshift=-0.4em},
]
\addplot[color=infllmColor, mark options={solid, fill=infllmColor}, forget plot] coordinates {
    (60.156250,19.937876)
    (69.921875,17.885627)
    (80.078125,15.465998)
    (89.843750,7.577477)
};
\addplot[color=nsaColor, mark options={solid, fill=nsaColor}, forget plot] coordinates {
    (60.156250,16.600174)
    (69.921875,14.619760)
    (80.078125,13.262486)
    (89.843750,5.150647)
};
\addplot[color=fullAttnColor, dashed, mark=none, line width=1.2pt, forget plot] coordinates {
    (50,19.789781)
    (90.5,19.789781)
};
\addplot[color=adansaColor, mark options={solid, fill=adansaColor}, forget plot] coordinates {
    (60.550500,19.489357)
    (69.825500,19.181005)
    (83.420300,17.452093)
    (89.792800,13.930141)
};

\nextgroupplot[
    title={LongQA},
    ymin=25,
    ymax=40,
    ytick={25,30,35,40},
    xlabel={Sparsity $\to$ (\%)},
    xlabel style={yshift=0.4em},
]
\addplot[color=infllmColor, mark options={solid, fill=infllmColor}, forget plot] coordinates {
    (60.156250,36.551150)
    (69.921875,35.771378)
    (80.078125,36.050952)
    (89.843750,34.175516)
};
\addplot[color=nsaColor, mark options={solid, fill=nsaColor}, forget plot] coordinates {
    (60.156250,34.876259)
    (69.921875,34.479287)
    (80.078125,32.560445)
    (89.843750,30.622554)
};
\addplot[color=fullAttnColor, dashed, mark=none, line width=1.2pt, forget plot] coordinates {
    (50,34.499485)
    (90.5,34.499485)
};
\addplot[color=adansaColor, mark options={solid, fill=adansaColor}, forget plot] coordinates {
    (59.645000,35.439728)
    (69.231800,36.395050)
    (82.765200,34.547737)
    (89.133400,36.425611)
};

\nextgroupplot[
    title={Summ.},
    ymin=5,
    ymax=18,
    ytick={5,10,15},
    xlabel={Sparsity $\to$ (\%)},
    xlabel style={yshift=0.4em},
]
\addplot[color=infllmColor, mark options={solid, fill=infllmColor}, forget plot] coordinates {
    (60.156250,13.902512)
    (69.921875,13.952366)
    (80.078125,13.383723)
    (89.843750,13.838568)
};
\addplot[color=nsaColor, mark options={solid, fill=nsaColor}, forget plot] coordinates {
    (60.156250,13.453733)
    (69.921875,12.163781)
    (80.078125,11.396816)
    (89.843750,9.908518)
};
\addplot[color=fullAttnColor, dashed, mark=none, line width=1.2pt, forget plot] coordinates {
    (50,14.009016)
    (90.5,14.009016)
};
\addplot[color=adansaColor, mark options={solid, fill=adansaColor}, forget plot] coordinates {
    (61.829000,13.990880)
    (71.134500,13.310058)
    (83.840900,12.724478)
    (89.762900,11.502713)
};

\nextgroupplot[
    title={Average},
    ymin=15,
    ymax=50,
    ytick={20,30,40,50},
    xlabel={Sparsity $\to$ (\%)},
    xlabel style={yshift=0.4em},
]
\addplot[color=infllmColor, mark options={solid, fill=infllmColor}, forget plot] coordinates {
    (60.156250,46.056438)
    (69.921875,44.517904)
    (80.078125,41.352660)
    (89.843750,30.473025)
};
\addplot[color=nsaColor, mark options={solid, fill=nsaColor}, forget plot] coordinates {
    (60.156250,39.937834)
    (69.921875,35.415457)
    (80.078125,30.298355)
    (89.843750,20.066030)
};
\addplot[color=fullAttnColor, dashed, mark=none, line width=1.2pt, forget plot] coordinates {
    (50,45.096842)
    (90.5,45.096842)
};
\addplot[color=adansaColor, mark options={solid, fill=adansaColor}, forget plot] coordinates {
    (58.526129,45.694452)
    (67.651314,45.339642)
    (81.542443,43.023605)
    (88.453271,39.433376)
};

\end{groupplot}
\end{tikzpicture}
\vspace{-0.25em}
\makebox[\linewidth][c]{\scriptsize
    \tikz[baseline=-0.5ex]{\draw[adansaColor, line width=1.4pt] (0,0) -- (0.34,0) node[pos=0.5, circle, fill=adansaColor, inner sep=1.4pt] {};} \methodname
    \hspace{0.3cm}
    \tikz[baseline=-0.5ex]{\draw[infllmColor, line width=1.4pt] (0,0) -- (0.34,0) node[pos=0.5, circle, fill=infllmColor, inner sep=1.4pt] {};} InfLLMv2
    \hspace{0.3cm}
    \tikz[baseline=-0.5ex]{\draw[nsaColor, line width=1.4pt] (0,0) -- (0.34,0) node[pos=0.5, circle, fill=nsaColor, inner sep=1.4pt] {};} NSA
    \hspace{0.3cm}
    \tikz[baseline=-0.5ex]{\draw[fullAttnColor, dashed, line width=1.2pt] (0,0) -- (0.34,0);} FullAttn
}
\vspace{-0.5em}
\caption{Accuracy--Sparsity Pareto frontiers on HELMET at 16K context length.}
\label{fig:pareto_all}
\vspace{-1em}
\end{figure*}

We present detailed results for the cost-effectiveness analysis across the seven HELMET tasks in Fig.~\ref{fig:pareto_all}. 
The results show that \methodname is generally more robust under sparse settings than NSA and InfLLMv2, especially on tasks such as Recall, RAG, Rerank, and Cite. 
By using an $\alpha$-entmax-guided chunk routing strategy, \methodname allocates the sparsity budget more effectively across layers and heads under high sparsity, thereby minimizing performance degradation even in extreme configurations.

\subsection{Ablations on Local Attention and $\sigma$}
\label{sec:supp-local-attn-sigma}

\begin{table*}[h]
    \small
    
    \centering
    \setlength{\tabcolsep}{3.5pt} 
    \caption{Performance under different local attention and $\sigma$ settings.}
    \label{tab:local-attn-sigma-full}
    \resizebox{\textwidth}{!}{
    \begin{tabular}{clcccccccccccccc}
    \toprule
    \multirow{2}{*}{Local Attn} & \multicolumn{1}{c}{\multirow{2}{*}{$\sigma$}} & \multicolumn{13}{c}{RULER-16K} & \multirow{2}{*}{\makecell{Avg. $\uparrow$\\(\%)}}  \\
    \cmidrule{3-15}
        &  & {SG1} & {SG2} & {SG3} & {MK1} & {MK2} & {MK3} & {MV} & {MQ} & {VT} & {CWE} & {FWE} & {QA1} & {QA2} &  \\
    \midrule
    \ding{55} & $10^8$ & \textbf{100.0} & \textbf{100.0} & \textbf{100.0} & \textbf{100.0} & \textbf{96.0} & 84.0 & \textbf{100.0} & 98.0 & 59.6 & 47.4 & 71.3 & 70.0 & \textbf{56.0} & 83.3 \\
    \midrule
    \ding{51} & $10^{1}$ & \textbf{100.0} & \textbf{100.0} & \textbf{100.0} & 98.0 & 94.0 & 82.0 & 99.5 & 90.0 & \textbf{60.4} & 46.4 & 69.3 & \textbf{72.0} & 54.0 & 82.0 \\
    \ding{51} & $10^{4}$ & \textbf{100.0} & \textbf{100.0} & \textbf{100.0} & 96.0 & \textbf{96.0} & 82.0 & 99.5 & 98.5 & 60.0 & 48.4 & 71.3 & \textbf{72.0} & \textbf{56.0} & 83.1 \\
    \ding{51} & $10^{12}$ & \textbf{100.0} & \textbf{100.0} & \textbf{100.0} & \textbf{100.0} & \textbf{96.0} & 84.0 & 99.5 & \textbf{99.0} & 60.0 & 45.6 & \textbf{72.0} & 70.0 & 54.0 & 83.1 \\
    \midrule
    \ding{51} & $10^8$ & \textbf{100.0} & \textbf{100.0} & \textbf{100.0} & 98.0 & \textbf{96.0} & \textbf{86.0} & \textbf{100.0} & \textbf{99.0} & 60.0 & \textbf{49.6} & \textbf{72.0} & \textbf{72.0} & 54.0 & \textbf{83.6} \\
    
    \bottomrule
    \end{tabular}
    }
    \end{table*}

Here, we study the effectiveness of local attention and the role of $\sigma$ in controlling prior strength through ablations. To assess local attention, we replace it with mean pooling, following prior work~\citep{zhao2026infllmv, lu2025moba}. We also evaluate different prior scaling factors, $\sigma \in \{10^1, 10^4, 10^8, 10^{12}\}$, where larger $\sigma$ weakens the entmax priors. As $\sigma$ increases, the priors gradually diminish; in the limit $\sigma \to +\infty$, they degenerate into non-differentiable binary masks, making the summarization query head $\bar{\bm q}$ untrainable and reducing local attention to mean pooling. We benchmark all settings on our 8B model using RULER at a 16K context length.

Table~\ref{tab:local-attn-sigma-full} shows that removing local attention degrades performance. A small $\sigma$ makes the prior overly strong and leads to suboptimal results, while $\sigma$ becomes relatively stable once sufficiently large. We therefore set $\sigma = 10^8$, which weakens the priors while preserving the differentiability.

\section{Full Algorithms}
\label{sec:supp-algorithms}

Algorithms~\ref{alg:dashattention_stage0}--\ref{alg:dashattention_stage2} give the overall view for the three stages of \methodname. We write $\bm{X}_i^{\mathcal{G}_r, j}$ for the slab of a tensor $\bm{X}$ obtained by fixing token $i$, restricting the head axis to $\mathcal{G}_r$, and taking the $j$-th block along the chunk axis; e.g.\ $\bm{Z}_i^{\mathcal{G}_r, j} \in \mathbb{R}^{g_q \times 32}$ is the Stage~1 SRAM tile. The chunk-axis block width is fixed at $32$ because the mask is bit-packed into $\mathrm{int}32$ words.

\begin{algorithm}[h]
\caption{Stage 0: Local Chunk Summarization}
\label{alg:dashattention_stage0}
\begin{algorithmic}[1]
\REQUIRE Per-head summary queries $\bar{\bm{q}}\in\mathbb{R}^{T_c\times h_{kv}\times d_h}$ (RoPE pre-applied), keys $\bm{K}\in\mathbb{R}^{n\times h_{kv}\times d_h}$, chunk size $B$, sequence length $n$.
\ENSURE Chunk summaries $\bar{\bm{K}}\in\mathbb{R}^{T_c\times h_{kv}\times d_h}$.
\STATE Divide $\bm{K}$ along the sequence axis into $T_c=\lfloor n/B\rfloor$ blocks $\bm{K}_1,\dots,\bm{K}_{T_c}$, each of size $B\times h_{kv}\times d_h$.
\FOR{$c=1,\dots,T_c$ and $r=1,\dots,h_{kv}$ in parallel}
    \STATE Load $\bar{\bm{q}}_c^{(r)}\in\mathbb{R}^{d_h}$ and $\bm{K}_c^{(r)}\in\mathbb{R}^{B\times d_h}$ from HBM to SRAM.
    \STATE On chip, compute $\bm{s}_c^{(r)}=\bm{K}_c^{(r)}\bar{\bm{q}}_c^{(r)}/\sqrt{d_h}\in\mathbb{R}^{B}$.
    \STATE On chip, compute $\bar{\bm{k}}_c^{(r)}=\mathrm{softmax}(\bm{s}_c^{(r)})^\top\bm{K}_c^{(r)}\in\mathbb{R}^{d_h}$.
    \STATE Write $\bar{\bm{k}}_c^{(r)}$ to HBM.
\ENDFOR
\end{algorithmic}
\end{algorithm}

\begin{algorithm}[t]
\caption{Stage 1: \entmax Block Routing}
\label{alg:dashattention_stage1}
\begin{algorithmic}[1]
\REQUIRE $\bm{Q}\in\mathbb{R}^{n\times h_q\times d_h}$, $\bar{\bm{K}}\in\mathbb{R}^{T_c\times h_{kv}\times d_h}$; $\alpha>1$, scaling $\gamma$, prior strength $\sigma$.
\ENSURE Bit-packed mask $\bm{\mathcal{M}}\in\{0,1\}^{n\times h_{kv}\times T_c}$ (stored as $\mathrm{int}32$ words, one bit per chunk), routing bias $\bm{d}\in\mathbb{R}^{n\times h_{kv}\times T_c}$.
\STATE Materialize $\bar{\bm{Z}}\in\mathbb{R}^{n\times h_q\times T_c}$ in HBM with $\bar z_{i,c}^{(h)}=\gamma(\alpha-1)\,\langle \bm{q}_i^{(h)},\bar{\bm{k}}_c^{(r(h))}\rangle/\sqrt{d_h}$.
\STATE Divide $\bar{\bm{Z}}$ along the chunk axis into $T_{tc}=\lceil T_c/32\rceil$ blocks of width $32$ (one $\mathrm{int}32$ mask word each), so that $\bar{\bm{Z}}_i^{\mathcal{G}_r,j}\in\mathbb{R}^{g_q\times 32}$ denotes the $j$-th chunk-block at row $i$, restricted to query-head group $\mathcal{G}_r$. Likewise define block slices $\bm{\mathcal{M}}_i^{(r),j}\in\{0,1\}^{32}$ and $\bm{d}_i^{(r),j}\in\mathbb{R}^{32}$.
\FOR{$1\leq i\leq n$ and $1\leq r\leq h_{kv}$ in parallel}
    \STATE Let $J_i=\lceil c_i/32\rceil$ (causal blocks only).
    \STATE For each $h\in\mathcal{G}_r$, solve $f(\tau)=\sum_{c<c_i}\max(0,\bar z_{i,c}^{(h)}-\tau)^{\nicefrac{1}{\alpha-1}}-1=0$ for $\tau_i^{(h)}$ via AdaSplash-2's algorithm~\citep{goncalves2025adasplash}.
    \STATE Initialize $S_i^{(r)}\gets 0,\ N_i^{(r)}\gets 0$ on SRAM.
    \FOR{$j=1$ to $J_i$}
        \STATE Load $\bar{\bm{Z}}_i^{\mathcal{G}_r,j}\in\mathbb{R}^{g_q\times 32}$ from HBM to SRAM.
        \STATE On chip, compute $\bm{w}_i^{(r),j}=\tfrac{1}{g_q}\sum_{h\in\mathcal{G}_r}\max(0,\bar{\bm{z}}_i^{(h),j}-\tau_i^{(h)})^{\nicefrac{1}{\alpha-1}}\in\mathbb{R}^{32}$.
        \STATE On chip, $\bm{\mathcal{M}}_i^{(r),j}\gets\mathds{1}[\bm{w}_i^{(r),j}>0]$.
        \STATE $S_i^{(r)}\mathrel{+}=\sum_{c:{w}_{i,c}^{(r),j}>0}\log{w}_{i,c}^{(r),j}$,\quad $N_i^{(r)}\mathrel{+}=\|\bm{\mathcal{M}}_i^{(r),j}\|_1$.
        \STATE Bit-pack $\bm{\mathcal{M}}_i^{(r),j}$ into one $\mathrm{int}32$ word and write to HBM.
    \ENDFOR
    \STATE $\mu_i^{(r)}\gets S_i^{(r)}/N_i^{(r)}$.
    \FOR{$j=1$ to $J_i$}
        \STATE Reload $\bar{\bm{Z}}_i^{\mathcal{G}_r,j}$ and recompute $\bm{w}_i^{(r),j}$ on chip.
        \STATE On chip, $\bm{d}_i^{(r),j}\gets(\log{\bm{w}}_i^{(r),j}-\mu_i^{(r)})/\sigma$ on active entries, else $0$.
        \STATE Write $\bm{d}_i^{(r),j}$ to HBM.
    \ENDFOR
    \STATE Set $\mathcal{M}_{i,c_i}^{(r)}\gets 1,\ d_{i,c_i}^{(r)}\gets 0$ (diagonal chunk).
\ENDFOR
\end{algorithmic}
\end{algorithm}

\begin{algorithm}[t]
\caption{Stage 2: Prior-Induced Sparse Softmax Attention}
\label{alg:dashattention_stage2}
\begin{algorithmic}[1]
\REQUIRE $\bm{Q}\in\mathbb{R}^{n\times h_q\times d_h}$, $\bm{K},\bm{V}\in\mathbb{R}^{n\times h_{kv}\times d_h}$, per-token mask $\bm{\mathcal{M}}\in\{0,1\}^{n\times h_{kv}\times T_c}$ and routing bias $\bm{d}\in\mathbb{R}^{n\times h_{kv}\times T_c}$ from Stage~1, chunk size $B$, GQA group size $g_q$.
\ENSURE $\bm{O}\in\mathbb{R}^{n\times h_q\times d_h}$.
\STATE Divide $\bm{K}$ and $\bm{V}$ along the sequence axis into $T_c=\lceil n/B\rceil$ blocks $\bm{K}_1,\dots,\bm{K}_{T_c}$, $\bm{V}_1,\dots,\bm{V}_{T_c}$, each of size $B\times h_{kv}\times d_h$.
\FOR{$i=1,\dots,n$ and $r=1,\dots,h_{kv}$ in parallel}
    \STATE Load $\bm{Q}_i^{\mathcal{G}_r}\in\mathbb{R}^{g_q\times d_h}$ from HBM to SRAM.
    \STATE Initialize $\bm{O}_i^{\mathcal{G}_r}\gets\bm{0}_{g_q\times d_h}$, $\bm{\ell}_i\gets\bm{0}_{g_q}$, $\bm{m}_i\gets-\infty\,\bm{1}_{g_q}$ on SRAM.
    \FOR{$j$ such that $\mathcal{M}_{i,j}^{(r)}=1$}
        \STATE Load $\bm{K}_j^{(r)},\bm{V}_j^{(r)}\in\mathbb{R}^{B\times d_h}$ and $d_{i,j}^{(r)}\in\mathbb{R}$ from HBM to SRAM.
        \STATE On chip, compute $\bm{S}_i^{\mathcal{G}_r,j}=\bm{Q}_i^{\mathcal{G}_r}\bm{K}_j^{(r)\top}/\sqrt{d_h}+d_{i,j}^{(r)}\,\bm{1}_{g_q\times B}\in\mathbb{R}^{g_q\times B}$.
        \STATE If $j=c_i$: apply causal mask to $\bm{S}_i^{\mathcal{G}_r,j}$.
        \STATE $\bm{m}_i'\gets\max(\bm{m}_i,\mathrm{rowmax}(\bm{S}_i^{\mathcal{G}_r,j}))$,\ \ $\bm{P}_i^{\mathcal{G}_r,j}\gets\exp(\bm{S}_i^{\mathcal{G}_r,j}-\bm{m}_i'\bm{1}^\top)$.
        \STATE $\bm{\ell}_i\gets e^{\bm{m}_i-\bm{m}_i'}\odot\bm{\ell}_i+\mathrm{rowsum}(\bm{P}_i^{\mathcal{G}_r,j})$.
        \STATE $\bm{O}_i^{\mathcal{G}_r}\gets\mathrm{diag}(e^{\bm{m}_i-\bm{m}_i'})\,\bm{O}_i^{\mathcal{G}_r}+\bm{P}_i^{\mathcal{G}_r,j}\bm{V}_j^{(r)}$.
        \STATE $\bm{m}_i\gets\bm{m}_i'$.
    \ENDFOR
    \STATE $\bm{O}_i^{\mathcal{G}_r}\gets\mathrm{diag}(\bm{\ell}_i)^{-1}\bm{O}_i^{\mathcal{G}_r}$; write $\bm{O}_i^{\mathcal{G}_r}$ to HBM.
\ENDFOR
\end{algorithmic}
\end{algorithm}

\end{document}